\useunder{\uline}{\ul}{}
\newcommand{\bfit}[1]{\textbf{\textit{#1}}}
\begin{document}
%
\title{Neural Networks with (Low-Precision) Polynomial Approximations:\\New Insights and Techniques for Accuracy Improvement}


\author{
\IEEEauthorblockN{Chi Zhang\IEEEauthorrefmark{1},  
Jingjing Fan\IEEEauthorrefmark{1}, 
Man Ho Au \IEEEauthorrefmark{2}\thanks{* Correspondence
to: Man Ho Au $<$mhaau@polyu.edu.hk$>$}, 
Siu Ming Yiu\IEEEauthorrefmark{1}} 
  
\IEEEauthorblockA{\IEEEauthorrefmark{1}Department of Computer Science, The University of Hong Kong, Hong Kong, China} 
\IEEEauthorblockA{\IEEEauthorrefmark{2}Department of Computing, The Hong Kong Polytechnic University, Hong Kong, China} 

}

\maketitle

\begin{abstract}
Replacing non-polynomial functions (e.g., non-linear activation functions such as ReLU) in a neural network with their polynomial approximations is a standard practice in privacy-preserving machine learning. The resulting neural network, called polynomial approximation of neural network (PANN) in this paper, is compatible with advanced cryptosystems to enable privacy-preserving model inference. Using ``highly precise'' approximation, state-of-the-art PANN offers similar inference accuracy as the underlying backbone model. However, little is known about the effect of approximation, and existing literature often determined the required approximation precision empirically.\\
In this paper, we initiate the investigation of PANN as a standalone object.  Specifically, our contribution is two-fold. Firstly, we provide an explanation on the effect of approximate error in PANN. In particular, we discovered that (1) PANN is susceptible to some type of perturbations; and (2) weight regularisation significantly reduces PANN's accuracy. We support our explanation with experiments. Secondly, based on the insights from our investigations, we propose solutions to increase inference accuracy for PANN. Experiments showed that combination of our solutions is very effective: at the same precision, our PANN is 10\% to 50\% more accurate than state-of-the-arts; and at the same accuracy, our PANN only requires  
a precision of $2^{-9}$ while state-of-the-art solution requires a precision of $2^{-12}$ using the ResNet-20 model on CIFAR-10 dataset. 
\end{abstract}


%
\IEEEpeerreviewmaketitle
\newtheorem{theorem}{Theorem}[section]
\newtheorem{corollary}{Corollary}[theorem]
\newtheorem{lemma}[theorem]{Lemma}
\section{Introduction}
\label{sec:intro}

\par 
Machine learning (ML) is revolutionising different industries in recent years. Despite its widespread adoption, a key challenge in deploying ML is ensuring data privacy. One promising direction is to employ advance cryptosystems such as multiparty computation (MPC)~\cite{244032, peng2023autorep,baruch2023training} and homomorphic encryption (HE)~\cite{gilad2016cryptonets,lee2022low} during model training and model inference, resulting in privacy-preserving training and privacy-preserving inference. In this paper, we use the term Privacy-preserving machine learning (PPML) to cover both cases. 

\par While in principle these advance cryptosystems can be used to protect sensitive data used in the computation of any functionality, in practice they are designed to support basic arithmetics operations (i.e., addition and multiplication) or basic Boolean operations (i.e., AND and OR) only. This limitation makes it expensive to compute non-polynomial functions such as ReLU, sigmoid, and maxpool, commonly found in ML, and in particular, neural networks. Existing PPML address this limitation by replacing these functions with their polynomial approximations \cite{peng2023autorep, lee2022low, lee2021precise,baruch2023training}. In this paper, we refer to this type of neural network in which non-polynomial functions have been replaced with polynomials as polynomial approximation of neural network (PANN). Similar to existing MPC and HE based PPML, this paper focuses on privacy-preserving inference since privacy-preserving training using HE and MPC are still considered too inefficient in practice.

\par Naturally, the (implicit) use of PANN in privacy-preserving inference involves two design considerations, namely, the precision of the polynomial approximation and the accuracy of the PANN (compared with the underlying backbone model). Intuitively, the accuracy of inference cannot be guaranteed if the approximation introduces a large error. For example, early schemes with simple approximation and large errors can handle only simple model and tasks, as non-trivial errors generated by approximation result render the inference results useless in more complex neural networks \cite{gilad2016cryptonets, al2020towards, boemer2019ngraph,244032}. 

On the other hand, the use of high precision approximation guarantees inference accuracy at the cost of higher computational overhead \cite{lee2022optimization, lee2021high}. 
Although precise approximation can attain the same accuracy as the backbone model, it leads to much higher costs. Table \ref{tab:TimeCost} gives the time cost of PANN and their backbone models, and the corresponding accuracy using state-of-the-art techniques (without fine-tuning on PANN). It can be seen that the PANN on ResNet-20 with error bound $2^{-12}$ takes 71.2s to perform inference on the entire CIFAR-10 test set, whereas that with error bounds $2^{-8}$ only needs 28.2s. However, the inference accuracy is also significantly lower.

\par We make two additional remarks here. Firstly, while the overhead of using PANN is high, representing the non-polynomial functions using basic arithmetic or boolean operations is even more expensive. Secondly, we consider PANN as a standalone object, and compare their performance directly. When they are used in combination with advance cryptosystems, the difference will be further enlarged\footnote{For instance, in MPC, the function to be computed is first converted into a boolean or arithmetic circuit. If the circuit consists of $n$ gates, the resulting cost of the MPC is $O(n)$ but not necessarily linear in $n$.}. 

\par Some efforts have been made to increase PANNs' accuracy while reducing overheads by customizing the model structure for PANN or fine-tuning (training) the model on PANN \cite{gilad2016cryptonets, peng2023autorep, cryptoeprint:2023/162, baruch2023training} and achieve satisfactory performance.

However, there are several subtleties involved. Firstly, the polynomials' derivative is unbounded, making it hard to run gradients descent. Secondly, these models are not generic. Different approximation methods or even different precision for the same method can use different polynomials. This implies that these models must be trained separately for each method and precision. Thirdly, in many cases, popular model structures have been widely studied and provide pre-trained models, and thus training (or fine-tuning) over PANN or non-generic model structures can result in much higher validation cost and may not be desirable for real-world deployment. 

\begin{table}
    \caption{Time cost of backbone models (bb) and PANN with different error bound. Numbers in brackets indicate inference accuracy.}
    \label{tab:TimeCost} 
    \vskip 0.15in
\resizebox{\columnwidth}{!}{
  \begin{tabular}{c|c|c|c|c|c|c}
    \toprule
     & $2^{-8}$& $2^{-9}$& $2^{-10}$& $2^{-11}$& $2^{-12}$& bb\\
    \hline
    \multirow{2}{*}{ResNet-20} & 28.2s & 33.6s & 42.3s &   62.5s & 71.2s & 2.6s\\
    & (65.70) & (89.69) & (91.24) & (91.42) & (91.52) & (91.56) \\
    \hline
    \multirow{2}{*}{Shufflenetv2} & 38.1s & 44.8s & 50.5s & 79.8s & 89.7s & 3.3s\\
    & (11.94) & (32.08) & (84.96) &  (87.24)  & (88.13) & (88.60)\\
    \hline
    \multirow{2}{*}{DLA-34} & 128.8s &156.1s &163.5s &  251.8s  & 387.1s & 5.3s\\
    & (12.78) & (45.67) & (88.53) & (92.73) & (94.45) & (95.10)\\
    \hline
    \multirow{2}{*}{Mobilenetv2} & 221.6s &282.7s &295.5s &   479.1s  & 542.9s & 3.9s\\
    & (10.78) & (15.41) & (82.85) & (89.54) & (90.90) & (91.45)\\
    \bottomrule
  \end{tabular}
} 
\end{table}

\par To sum up, past researches appear to indicate that a trade-off seems inevitable: Low precision yields high efficiency but low accuracy. High precision yields high accuracy but unaffordable computation costs. The effect of approximation is relatively less understood. This serves as the motivation of this paper. Looking ahead, to better address this problem, as a first step, we separate PANN from the underlying cryptosystem, and investigate how approximation error affects inference accuracy in PANN. 

\subsection{Overview of Our Results} \label{sec:introoverview}
We initiate the investigation on how to reduce the impact on inference accuracy due to approximation errors in PANN. For the ease of writing, we say a neural network $\mathbb{F}$ is \emph{sturdy} if the PANN of $\mathbb{F}$ has good resistance against approximation errors. 
In other words, we can use less precise approximation for a sturdy neural network to maintain the inference accuracy. 

\par Specifically, we bound the lower bound of increased loss resulting from approximation errors and obtain the following two interesting results: 
\begin{itemize}
    \item Approximation errors on negative inputs of ReLU lead to larger loss than those on positive inputs of ReLU;
    \item The increased loss resulting from approximation errors will increase with larger weight decay and more training epochs.
\end{itemize}
To substantiate our analysis, we conduct tests and have observed the results across various model structures, datasets, and approximation methods.

\par Based on our findings, we present two solutions to improve ``sturdiness" of neural networks, namely, the negative inputs leakage method and reducing the use of weight regularization. Our first solution is based on the insight that approximation errors on ReLU's negative inputs lead to more loss. In more detail, 
our first solution introduces perturbations on the negative inputs of ReLU during baseline training. Our second solution is based on the findings that weight regularization is harmful to ``sturdiness”. Intuitively, we can improve “sturdiness” by removing weight regularization during training. However, weight regularization is crucial in providing generalization. Our second solution is considered a trade-off: we use minimal weight regularization and combine it with Mixup to maintain an acceptable accuracy in the backbone model yet greatly increase its ``sturdiness”.
We conduct extensive experiments to illustrate the effectiveness of our solutions. For instance, we achieve the state-of-the-art non-fine-tuning accuracy on CIFAR-10 with only 40\% to 60\% time cost (as shown in Figure 1). We improve the accuracy of the state-of-the-art ReLU replacement scheme~\cite{peng2023autorep} by 3\%. Compared to previous schemes, our methods have the following advantages:
\begin{itemize}
    \item Our obtained models are based on generic model structures rather than being specifically fine-tuned for particular PANNs. That means the same model can perform as the backbone (or pre-trained) model for various approximation precision and methods to enhance their performance without adjustment. This lowers the training cost and ensures better compatibility with other approaches.
    \item Our training methods are also general without restricting to particular model structures or approximation methods. This allows for a variety of fine-tuning-based schemes to be further optimized with our approaches.
\end{itemize}

\subsection{Outline}
\par The rest of the paper is organized as follows.
Section 2 gives the theorem and explanation of how approximation errors affect PANN.
Section 3 introduces the related works. Section 4 describes the preliminary of our study. 
Section 5 provides solutions to improve low-precision PANN's accuracy. 
Section 6 gives the experiment results.
Finally, in Section 7, we conclude this paper and discuss potential avenues for future research.

\begin{figure*}[!t]
\centering
\subfloat[ResNet-20]{\includegraphics[width=1.7in]{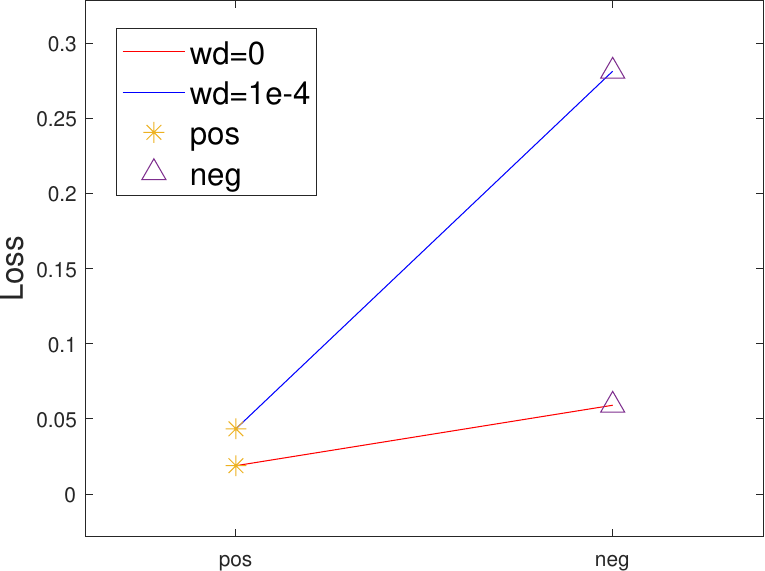}%
\label{fig_first_case}}
\hspace{0mm}
\subfloat[Shufflenetv2]{\includegraphics[width=1.7in]{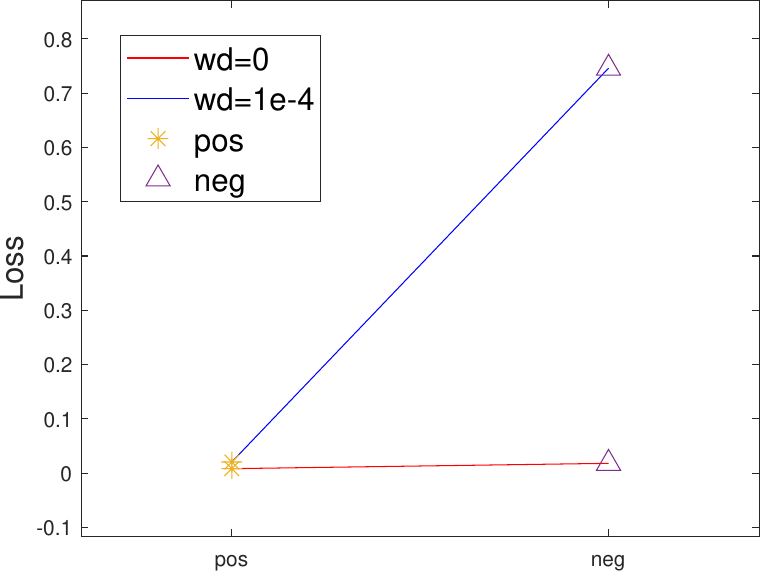}%
\label{fig_first_case}}
\hspace{0mm}
\subfloat[DLA-34]{\includegraphics[width=1.7in]{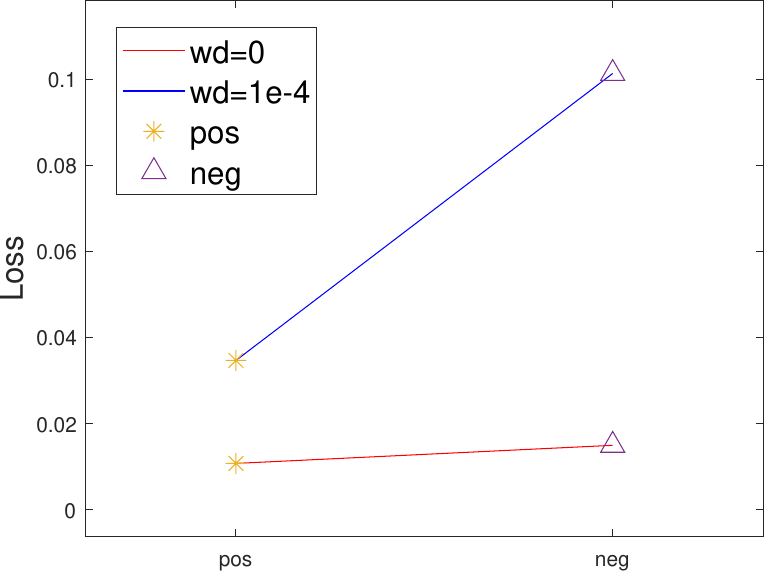}%
\label{fig_first_case}}
\hspace{0mm}
\subfloat[Mobilenetv2]{\includegraphics[width=1.7in]{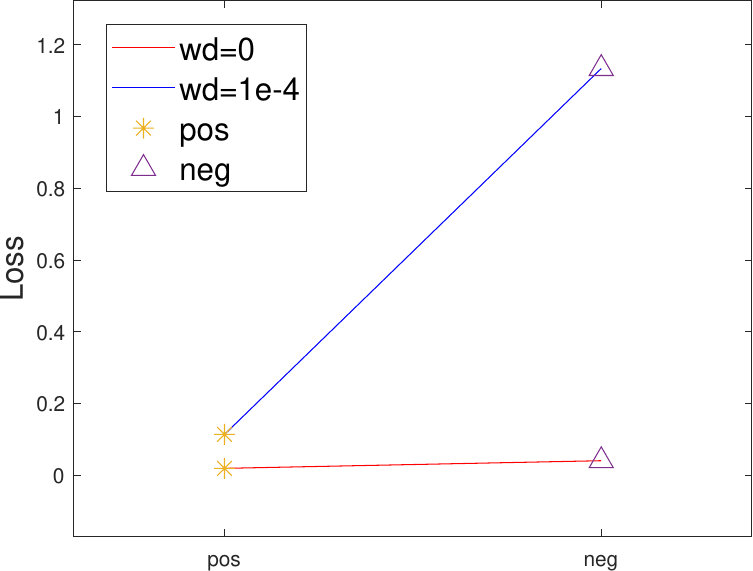}%
\label{fig_second_case}}
\caption{Testing loss increment (compared to baseline) of PANN~\cite{lee2022low} approximating ReLU with positive inputs (pos) and negative inputs (neg) on models trained with weight decay (wd) 0 and 1e-4. It shows that weight decay can amplify the loss increment caused by approximation. Additionally, approximating ReLU with negative inputs results in more loss compared to approximating ReLU with positive inputs.}
\label{fig:loss_wd_pn}
\end{figure*}

\section{Effects of Approximation Errors on PANN}\label{sec:Stationaryshift}

\par In this section, we present our main theorem and discuss the factors that affect the loss increment caused by approximation errors. Since for most PANN schemes, activation functions, especially the ReLU function, are the main components requiring approximation, we will focus specifically on the approximation of ReLU functions.

\par Considering a two-layer neural network with ReLU activation functions. The input point is a $d$-dimentional vector $\bfit{x} \in \mathbb{R}^{d}$. The weights is $\bfit{W} \in \mathbb{R}^{n*d}$, let $\bfit{z}=\bfit{W}\bfit{x}$ and $\bfit{y} = \operatorname{ReLU}(\bfit{z})$. For a loss function $L$, weight decay parameter $\lambda$, and learning rate $\eta$, we define the loss function for SGD with with L2 regularization at epoch $t$ as:
\begin{equation} \label{eq:loss_nowd}
    f(\bfit{W}_{t})= L(\bfit{W}_{t}) + \frac{\lambda}{2\eta} \|W_t\|^{2} 
\end{equation}
For the ease of writing, we use $h(\operatorname{ReLU}(z)) = f(\bfit{W},\bfit{X})$ to represent the same loss but with inputs $\operatorname{ReLU}(z)$ where $z$ is a value in $\bfit{z}$, such that the loss with error $\epsilon$ is $h(\operatorname{ReLU}(z)+\epsilon)$. We use $\Delta h$ to represent the increased loss caused by error $\epsilon$ (Note that $\Delta h = \Delta f$). Now we give two theories and the corresponding assumptions.

\subsubsection*{Approximation errors on the negative inputs
of ReLU increase more loss than those on positive inputs} A question that remains unexplored is which part of approximation errors lead to more accuracy reduction (or a higher loss). Here we give Theorem \ref{th:pos_neg_loss} which indicates that errors on negative inputs of ReLU result in larger loss and therefore poorer accuracy.
\begin{theorem} \label{th:pos_neg_loss}
    Let two convex function $h_1: \mathbb{R} \to \mathbb{R}$ and $h_2: \mathbb{R} \to \mathbb{R}$ differentiable in $(-\infty,0)\cup(0,\infty)$. Let $\overline{z}_1 \in \mathbb{R}$ and $\overline{z}_2 \in \mathbb{R}$ and $\overline{z}_1<0, \overline{z}_2>0$, such that $h_1'(\overline{z}_1))=h_2'(\overline{z}_2))=0$. Let $\epsilon \in \mathbb{R}$ be a small value, we have:
    \begin{equation}
        \lim_{\epsilon \to 0} (\Delta h_1 - \Delta h_2) = \epsilon \cdot \mathop{\operatorname{inf}}_{z>0}\phi_{0,h_1}(z) 
    \end{equation}
\end{theorem}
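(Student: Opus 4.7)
My plan is to reduce the claim to a first-order expansion in $\epsilon$ that separates into two essentially different cases, driven by whether the training-optimum pre-activation $\bar{z}_i$ lies on the negative or positive side of $\operatorname{ReLU}$. As a first step I would rewrite each $\Delta h_i$ using the value of $\operatorname{ReLU}$ at $\bar{z}_i$: since $\bar{z}_1 < 0$, we have $\operatorname{ReLU}(\bar{z}_1) = 0$, so the approximation error enters $h_1$ exactly at the ReLU kink and $\Delta h_1 = h_1(\epsilon) - h_1(0)$; since $\bar{z}_2 > 0$, we have $\operatorname{ReLU}(\bar{z}_2) = \bar{z}_2$, so the error enters $h_2$ in a neighbourhood of the interior point $\bar{z}_2$ and $\Delta h_2 = h_2(\bar{z}_2 + \epsilon) - h_2(\bar{z}_2)$. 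This decomposition is what generates the asymmetry between approximation errors on the negative and positive sides of ReLU.

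Next, I would expand both increments to first order in $\epsilon$. The easier side is $\Delta h_2$: since $h_2$ is differentiable at $\bar{z}_2 > 0$ with $h_2'(\bar{z}_2)=0$, Taylor's theorem gives $\Delta h_2 = o(\epsilon)$ as $\epsilon \to 0$, so $\Delta h_2$ contributes nothing to the leading-order term. For $\Delta h_1$ the correct tool is not Taylor (because $0$ is potentially a kink of $h_1$) but the classical convex-analytic identity expressing the right derivative at a point as the infimum of its forward secant slopes. Writing $\phi_{0,h_1}(z) = (h_1(z)-h_1(0))/z$, convexity of $h_1$ forces $\phi_{0,h_1}$ to be non-decreasing on $(0,\infty)$, whence
\[
h_1'(0^+) \;=\; \inf_{z>0}\phi_{0,h_1}(z).
\]
Because $\bar{z}_1 < 0$ minimizes the convex $h_1$, the function is non-decreasing on $[0,\infty)$ and the infimum is finite and nonnegative. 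Hence $\Delta h_1 = \epsilon\cdot\inf_{z>0}\phi_{0,h_1}(z) + o(\epsilon)$, and subtracting $\Delta h_2$ gives the stated asymptotic relation.

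The main obstacle I anticipate is less a technical one than one of interpretation: as literally written the right-hand side depends on $\epsilon$ while the left-hand side is a limit, so the equation has to be read as an asymptotic equivalence $\Delta h_1 - \Delta h_2 \sim \epsilon\cdot\inf_{z>0}\phi_{0,h_1}(z)$ as $\epsilon\to 0$ (equivalently, dividing by $\epsilon$ before taking the limit). The substantive work is concentrated in the convex step: replacing the one-sided derivative at the ReLU kink by a symmetric Taylor expansion would wash out precisely the asymmetry the theorem is trying to capture, so it is essential to invoke the secant-slope characterization rather than a two-sided derivative. A further care point is the sign of $\epsilon$: for $\epsilon < 0$ the left derivative $h_1'(0^-) \le h_1'(0^+)$ governs the expansion, and the cleanest reading of the theorem is for $\epsilon > 0$, which matches the empirically relevant regime in which a polynomial approximation overshoots what ought to be a zero output from ReLU.
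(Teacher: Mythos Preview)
Your proposal is correct and follows essentially the same route as the paper: split $\Delta h_1$ and $\Delta h_2$ according to whether $\operatorname{ReLU}(\bar z_i)$ lands at the kink $0$ or at the smooth interior point $\bar z_2$, handle the positive case by ordinary differentiability with $h_2'(\bar z_2)=0$, and handle the negative case by the convex secant-slope characterization $h_1'(0^+)=\inf_{z>0}\phi_{0,h_1}(z)$. The only cosmetic difference is that the paper routes the negative case through a mean-value theorem for nondifferentiable convex functions and then squeezes, whereas you invoke the right-derivative identity directly; your observation that the stated equality must be read as a first-order asymptotic (divide by $\epsilon$ before passing to the limit) is also well taken.
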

The theorem suggests that when approximating ReLUs with negative inputs, the loss is at least as large as approximating those with positive inputs. This implies that it's beneficial to reduce the approximation of  ReLU with negative inputs or to develop specific training strategies for negative inputs. 
\par In Figure \ref{fig:loss_wd_pn}, we present the increase in loss when only approximating ReLU with negative (neg) or positive (pos) inputs.
It illustrates the phenomenon that approximating ReLUs with negative inputs results in more loss than approximating ReLUs with positive inputs (when errors are small). This can be observed in various model structures and provides evidence for our theorem.

\subsubsection*{Weight decay enlarges the loss increasing resulting from approximation error as training progresses} Another important question is which factors in backbone model training affect the ``sturdiness”. Now we provide Theorem \ref{th:loss_bound_wd_t}, which proves that weight decay is responsible for weak sturdiness.
\begin{theorem} \label{th:loss_bound_wd_t}

    Let function $L: \mathbb{R} \to \mathbb{R}$ be a $\mathcal{L}$-smooth function and $h$ be a convex function with $h'(\operatorname{ReLU}(\overline{z}))=0$.
    Assume $h$ is differentiable in $(-\infty,0)\cup(0,\infty)$. Let $\epsilon \in \mathbb{R}$ be a small value. $\mathbb{E}[ \nabla L(\bfit{W}, X) - \nabla L(\bfit{W})  ] = 0$, $\mathbb{E}[\| \nabla L(\bfit{W}, X) - \nabla L(\bfit{W}) \|^{2} ] \leq \sigma^{2}$, $\| \nabla l(\bfit{W}) \| \leq G$ for any $\bfit{W}$, and $\eta \leq \frac{C}{\sqrt{t+1}} $.
    If the model is trained $t'+1$ epochs after it has reached the stationary point $\bfit{W}^{*}$ where $L(\bfit{W}^{*}, X)=L^{\star}$, the increased loss $\Delta L=h(\operatorname{ReLU}(\overline{z}+\epsilon)) - h(\operatorname{ReLU}(\overline{z}))$ caused by error $\epsilon$ is bounded by:
        \begin{equation}
        ||\epsilon||^2 \cdot\mathbb{E}[|| \mathop{\operatorname{inf}}_{z>\overline{z}}\phi_{\overline{z}}(z)||^2] \leq \mathbb{E}[||\Delta h||^2]
        \end{equation}
        where 
        \begin{equation}\notag
            \operatorname{sup} \mathbb{E}[||\operatorname{inf}_{z>\overline{z}}\phi_{\overline{z}}(z)||^2] \leq ||h'_{+,t}(\overline{z})||^2 + \frac{t'(C_1+C_2)}{d\sqrt{t+t'+1}}
        \end{equation}
        and:
        \begin{align}\notag
            C_1 &=\frac{L(\bfit{W}_{0}) - L^{\star} }{C} \\
            C_2 &= C (\mathcal{L} + \lambda) ((G+  \sup(\lambda\|W\|) )^{2} + \sigma^{2}),
        \end{align}
\end{theorem}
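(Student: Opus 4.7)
The plan is to decompose the theorem into two logically independent pieces: a convexity-based lower bound expressing $\mathbb{E}[\|\Delta h\|^{2}]$ in terms of $\mathbb{E}[\|\inf_{z>\overline{z}}\phi_{\overline{z}}(z)\|^{2}]$, and an SGD drift bound showing that this infimum quantity can grow by at most $O(t'/\sqrt{t+t'+1})$ over $t'$ additional training epochs, with the constants $C_{1},C_{2}$ explicitly isolating the contribution of weight decay $\lambda$.

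For the lower bound I would mirror the argument of Theorem \ref{th:pos_neg_loss}. Because $h$ is convex and differentiable on $(0,\infty)$, the first-order/secant inequality yields
\begin{equation}
\Delta h \;=\; h(\operatorname{ReLU}(\overline{z}+\epsilon)) - h(\operatorname{ReLU}(\overline{z})) \;\geq\; \epsilon\,\phi_{\overline{z}}(z)
\end{equation}
for every $z>\overline{z}$, where $\phi_{\overline{z}}$ is the same slope expression that appears in the earlier theorem. Passing to the infimum over $z>\overline{z}$, then squaring and taking expectation over the data, gives $\mathbb{E}[\|\Delta h\|^{2}]\geq |\epsilon|^{2}\,\mathbb{E}[\|\inf_{z>\overline{z}}\phi_{\overline{z}}(z)\|^{2}]$ directly.

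For the upper bound on $\sup\mathbb{E}[\|\inf_{z>\overline{z}}\phi_{\overline{z}}(z)\|^{2}]$ I would use the fact that at the stationary point $\bfit{W}^{\star}$ the right-derivative of $h$ at $\overline{z}$ vanishes, so the infimum at the \emph{current} epoch equals $\|h'_{+,t}(\overline{z})\|^{2}$ plus a drift term that accumulates over the $t'$ additional epochs. To quantify the drift I would apply a Ghadimi--Lan style non-convex SGD analysis to the regularised objective $f(\bfit{W})=L(\bfit{W})+\tfrac{\lambda}{2\eta}\|W\|^{2}$: the regulariser boosts the effective smoothness from $\mathcal{L}$ to $\mathcal{L}+\lambda$ (since the Hessian of $\tfrac{1}{2}\|W\|^{2}$ is the identity) and inflates the effective gradient norm from $G$ to $G+\sup(\lambda\|W\|)$. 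With the prescribed diminishing step size $\eta\leq C/\sqrt{t+1}$, the standard telescoping identity $\mathbb{E}[f(\bfit{W}_{s+1})]\leq \mathbb{E}[f(\bfit{W}_{s})]-\eta\mathbb{E}[\|\nabla f\|^{2}]+\tfrac{(\mathcal{L}+\lambda)\eta^{2}}{2}\mathbb{E}[\|g\|^{2}]$ gives
\begin{equation}
\frac{1}{t'}\sum_{s=1}^{t'}\mathbb{E}[\|\nabla f(\bfit{W}_{s})\|^{2}] \;\leq\; \frac{C_{1}+C_{2}}{\sqrt{t+t'+1}},
\end{equation}
with $C_{1}=(L(\bfit{W}_{0})-L^{\star})/C$ from the optimality-gap term and $C_{2}=C(\mathcal{L}+\lambda)((G+\sup\lambda\|W\|)^{2}+\sigma^{2})$ from the variance/gradient-bound term. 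Multiplying by $t'$, pushing the bound from the total gradient norm through the chain rule down to the individual pre-activation $\overline{z}$, and distributing across coordinates (contributing the $1/d$ factor) yields the stated $\frac{t'(C_{1}+C_{2})}{d\sqrt{t+t'+1}}$ tail.

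The hard part will be the bookkeeping around the three effects injected by weight regularisation — its contribution to the effective smoothness $\mathcal{L}+\lambda$, to the effective gradient norm $G+\sup(\lambda\|W\|)$, and to the per-coordinate drift of the positive-branch slope of $h$ — together with a clean explanation of the $1/d$ averaging. A secondary subtlety is reconciling the ReLU non-differentiability with the convexity argument: because $h$ is convex only on each branch of ReLU, the secant-slope step must be applied separately on $(-\infty,0)$ and $(0,\infty)$, and only the branch $z>\overline{z}$ is retained when forming the infimum that appears in the stated bound.
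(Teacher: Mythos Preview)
Your two-piece decomposition---convexity lower bound plus SGD-with-weight-decay drift bound---is exactly the paper's route: it invokes its Lemma~\ref{deltah} for the first piece and the Xie et al.\ gradient-norm bound (reproduced as Theorem~\ref{eq:wd_incnorm}, proved via the same Ghadimi--Lan telescoping you sketch) for the second, combining them through the same $\|h'_{+,t}(\overline{z})\|^{2}+\tfrac{1}{d}\mathbb{E}\bigl[\sum\|\nabla f\|^{2}\bigr]$ decomposition you describe.

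One small slip to fix: the displayed claim $\Delta h \geq \epsilon\,\phi_{\overline{z}}(z)$ for \emph{every} $z>\overline{z}$ is false---on a strictly convex $h$ the secant slope $\phi_{\overline{z}}(z)$ is unbounded above as $z\to\infty$, so no finite $\Delta h$ can dominate $\epsilon\,\phi_{\overline{z}}(z)$ pointwise. What actually holds (and what the paper's Lemma~\ref{deltah} records) is $\Delta h=\epsilon\,\phi_{\overline{z}}(\overline{z}+\epsilon)\geq\epsilon\,\inf_{z>\overline{z}}\phi_{\overline{z}}(z)$, obtained directly from monotonicity of the secant slope of a convex function; so skip the pointwise inequality and go straight to the infimum.
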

\par The theorem demonstrates that the loss will increase with larger weight decay $\lambda$ and epoch $t'$. We have observed and confirmed the effect of these two factors on PANN. 
\par An example of PANN~\cite{lee2022low} on models trained with different weight decay is shown in Figure~\ref{fig:wd_PANNandori_training}. It illustrates that though the backbone (bb) models have similar accuracy, the accuracy of PANN reduces when weight regularization ($\lambda$) is applied as the epochs ($t'$) increase. Larger weight decay can make this reduction quicker ($\lambda$ increases). Furthermore, Figure \ref{fig:loss_wd_pn} illustrates the increase of testing loss of PANN~\cite{lee2022low} with different backbone and different weight decay, indicating that weight decay can elevate the testing loss in all cases. (Still, note that the error is small.)

\begin{figure}[h]
\centering
        {\includegraphics[width=\columnwidth]{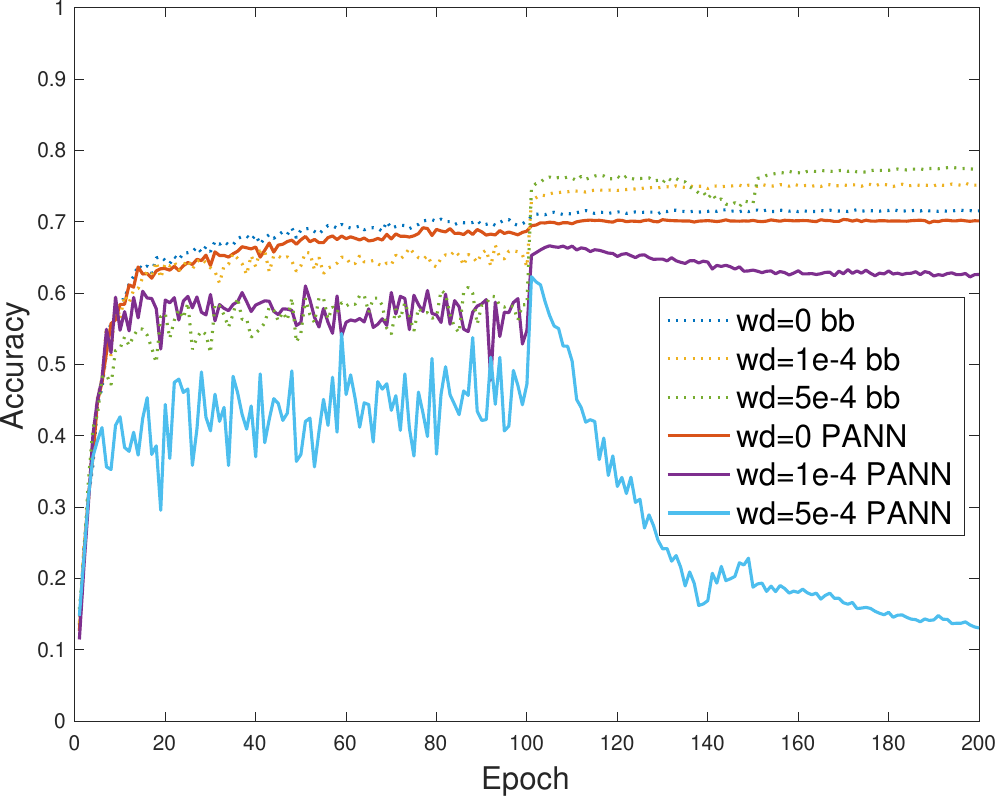}}
    \caption{Backbone models (bb) trained with different weight decay (wd) have similar accuracy (dot line). However, large weight decay can reduce their accuracy on PANN as the epoch increases. (ResNet-18, CIFAR-100, precision $2^{-9}$)}
    \label{fig:wd_PANNandori_training}
\end{figure}

\par This phenomenon can be observed in different model structures, datasets, approximation error bounds (Table \ref{tb:results}, Table \ref{tb:results_2}), and approximation methods. We believe the above theorems and analysis provide a plausible explanation on how approximation errors reduce the accuracy of PANN. (Detailed proof of the two theorems is put in Appendix \ref{app:proof}.) 
\subsubsection*{An intuitive explanation} 
Here we give an intuitive explanation based on the definition of stationary point to help understand the two theorems. A stationary point is where the function's derivative is zero, and is a necessary condition for minimizing training losses. For stochastic gradient descent (SGD) method without weight regularization, we have $f=L$. The model weights for epoch $t+1$ are updated by:
\begin{equation}\label{eq:L2update}
    \bfit{W}_{t+1}  = \bfit{W}_{t} -  \eta \frac{\partial L(\bfit{W}_{t})}{\partial \bfit{W}_{t}}, 
\end{equation}
then model training is searching for the smallest loss which have $\frac{\partial L(\bfit{W}_{t})}{\partial \bfit{W}_{t}}=\mathbf{0}$.
\par Let's focus on a single input sample $x_{1}$ and the $j$-th column of $\bfit{W}$ (represented as $\bfit{W}_{j}$). Let $z$ be a single value in vector $\bfit{z}$ and $y$ be the corresponding value in $\bfit{y}$ that satisfies $y=\operatorname{ReLU}(z)=\operatorname{ReLU}(\bfit{W}_j x_{1})$. Assume a non-zero stationary point $\bfit{W}^*$ is found at epoch $t$. For $\lambda =0$, by combining Equation \ref{eq:loss_nowd}, we can get:
\begin{equation}\label{eq:loss_backbone}
     \frac{\partial L}{\partial \bfit{W}^*_{j}} = \frac{\partial L}{\partial y}\frac{\partial y}{\partial z}\frac{\partial z}{\partial \bfit{W}^*_{j}} = 0
\end{equation}
In this condition, the neural network has minimal loss and is the most stable because small input changes will have little effect on the outputs. However, for PANN with approximation errors, the value $y$ is changed to:
\begin{equation} \label{eq:y_epsilonz}
    \widetilde{y} = y + \epsilon_{relu}(z)
\end{equation}
This leads to a difference between PANN and the backbone (or pre-trained model) that a stationary point in the original model may not be the stationary point in PANN. We discuss it in two cases,  ReLU with negative or zero inputs ($z<0$) and ReLU with positive inputs ($z>0$).
\par For $z <0$, we have $\frac{\partial y_{-}}{\partial z}=\frac{\operatorname{ReLU}(z)}{\partial z}=0$. Due to the property of ReLU, $L$ is non-differential at the point $z=0$. So we have $\frac{\partial y_{-}}{\partial z}\neq \frac{\partial y_{+}}{\partial z}$.

From Equation \ref{eq:loss_backbone} we can know $\frac{\partial L}{\partial \bfit{W}^*_{j}}=0$ and the stationary point holds for all $z \leq0$ in the backbone model. However, in PANN $\operatorname{ReLU}(z)$ is replaced with $\operatorname{ReLU}(z)+\epsilon$. Assume $\frac{\partial L}{\partial y}\frac{\partial z}{\bfit{W}^*_{j}} \neq 0$ (which is the most cases), then $\bfit{W}^*$ may not be a stationary point of PANN, because from \ref{eq:y_epsilonz} we can get:
\begin{equation} \label{eq:partial_y_z_lt0}
    \frac{\partial \widetilde{y}}{\partial z} = \frac{\partial y_{+}}{\partial z} + \frac{\partial \epsilon}{\partial z}
\end{equation}
By combining Equation \ref{eq:loss_backbone} and \ref{eq:partial_y_z_lt0}, we get:
\begin{equation}\label{eq:loss_PANN_with_error_zlt0}
     \frac{\partial \widetilde{L}}{\bfit{W}^*_{j}} = \frac{\partial L}{\partial y}\frac{\partial \epsilon}{\partial z}\frac{\partial z}{\bfit{W}^*_{j}}
\end{equation}

As long as $\frac{\partial y_{+}}{\partial z} + \frac{\partial \epsilon}{\partial z} \neq 0$, we have $\frac{\partial \widetilde{L}}{\bfit{W}^*_{j}} \neq 0$, which means $W^*$ is not the stationary point of PANN. The loss is not a minimum and is not sufficiently stable.

\par Differently, for $z > 0$, we have $\frac{\partial y}{\partial z} =1$ in backbone model. By combining Equation \ref{eq:loss_backbone} we have:
\begin{align}\label{eq:devi_LtoW2}
    \frac{\partial L}{\partial y}\frac{\partial z}{\bfit{W}^*_{j}} =0
\end{align}
For PANN, from Equation \ref{eq:y_epsilonz} we can get:
\begin{equation}\label{eq:partial_yz_zge0}
    \frac{\partial \widetilde{y}}{\partial z} = \frac{\partial y}{\partial z} + \frac{\partial \epsilon}{\partial z} = 1+\frac{\partial \epsilon}{\partial z}
\end{equation}
Because ReLU is differentiable for positive inputs, when the approximation error is small enough, the neural network can be seen as linear and $\frac{\partial L}{\partial y}\frac{\partial z}{\bfit{W}^*_{j}}$ remains unchanged. By combining Equation \ref{eq:loss_backbone} and \ref{eq:partial_yz_zge0}, we have:
\begin{align}\label{eq:devi_LtoW_PANN2}
    \frac{\partial \widetilde{L}}{\partial W_{j}} = (1+\frac{\partial \epsilon}{\partial z})\frac{\partial L}{\partial y}\frac{\partial y}{\partial z}\frac{\partial z}{\bfit{W}^*_{j}} = 0
\end{align}
That means the backbone model and PANN share the same stationary point when errors are added to positive inputs ($z_{j}> 0$) of ReLU, which reduces the effect of approximation errors. Besides, even if the weight was driven away from $W^*$ during the training, an item $-\frac{\partial L(W,x_1)}{\partial W}$ in weight updating will fix the stationary point towards $W^*$. 

When weight decay is involved, it has been proved that weight decay can lead to large gradient norms~\cite{xie2024overlooked}. It may lead to the following problems: (1) Increase the norm of $z$, thus increasing the amplitude of some noise proportional to the $||z||$; (2) Increase $\frac{\partial L}{\partial z_{+}}$ when $z<0$. Because the stationary points $\frac{\partial L}{\partial z_{-}} =0$ always hold, $\frac{\partial L}{\partial z_{+}}$ may be trained towards away from the stationary point. Weight decay will accelerate this process (by destroying patterns established before $z$ is negative).

\section{Related Works and Background}
\subsection{Approximation Methods for Neural Networks} \label{sec:related_app}

Replacing non-polynomial functions with polynomials has become a desirable tool for applying cryptography to neural networks. Early studies often adopt simple (low-degree) polynomials in approximation. For example, CryptoNets and HCNN utilized the square function to replace activation functions \cite{gilad2016cryptonets, al2020towards}. CryptoDL used degree 2 and degree 3 polynomials to approximate the ReLU function \cite{hesamifard2019deep}, while Faster Cryptonets leveraged minimax approximation with degree two \cite{chou2018faster}. nGraph-HE and Delphi adopt quadratic approximations to approximate activation functions \cite{boemer2019ngraph,244032}. These methods can handle only simple model (i.e. less than 10 layers) and tasks (i.e. classification on MINIST) because the non-negligible errors accumulated throughout layers can seriously affect the model outputs.

\par To address this, further studies proposed highly precise approximation to reduce the approximation error and can achieve the same accuracy as the backbone model~\cite{lee2022low, lee2021privacypreserving}. A common method for these precise approximations is Minimax approximation~\cite{lee2022optimization, lee2021high,lee2021minimax}, which combines many small-degree polynomials for approximation to reduce the computation cost. However, the reduced computational overhead of precise approximation is still unbearable. Increasing PANNs' accuracy while reducing overheads remains an important issue.

\subsection{Fine-tuning-based Solutions}
When seeking to improve the performance of PANN, many studies fine-tune (or train) models directly on PANN. Earlier studies such as CryptoNets and Faster Cryptonets directly train models on neural networks with low-degree polynomials activation functions~\cite{gilad2016cryptonets, chou2018faster}. The models obtained are essentially new models with a customized design for PANN (both in terms of structure and parameters) rather than being approximations of commonly used models. The limitations of simple polynomials activation and the customized structure make these models hard to tackle complex tasks. To better suit practical applications, recent studies tend to input pre-trained models as a starting point and then fine-tune them on PANN. For example, SNL and AutoReP fine-tune models after replacing a partial of ReLU functions with polynomials with one or two degree~\cite{cho2022selective, peng2023autorep}. AutoFHE fine-tunes models on precise PANN to achieve better performance and lighten the requirements for precision~\cite{cryptoeprint:2023/162}. Another study fine-tuning models after carefully adjusting the model structure for low-degree polynomial approximations has also obtained a high accuracy~\cite{baruch2023training}.
\par Despite its efficiency in improving PANN accuracy, several issues are involved. Firstly, polynomials are not appropriate for training when used as activation functions. The derivative of these polynomials is unbounded, which can lead to unstable values during gradient descent and get unexpected results (i.e. the parameter instability issues in AutoReP~\cite{peng2023autorep}). Secondly, a fine-tuned model is customized for the weights and polynomials it's trained on. The model must be retrained whenever you want to adjust the PANN's approximation precision and overhead (discussed in Section \ref{sec:PANNs}), which will result in additional training costs. Thirdly, generic model structures and activation functions are usually well-validated. Designing extra network structures for PANN results in higher design and test costs and is undesirable for other applications. Therefore, our study prefers to solve PANN's accuracy reduction problems on generic model structures crossing different approximation methods and precision requirements without fine-tuning.

\subsection{ReLU Replacement Schemes}
Reducing the number of activation functions in PANN can significantly reduce computation costs since they are the main part requiring approximation and cryptographic operations. SNL first achieves this by designing a parametric ReLU (PReLU) and applying a loss function to gradually adjust ReLU functions to linear functions~\cite{cho2022selective}. This method can reduce the ReLU number in a ResNet-18 model from 491.52K to 49.9K while maintaining 73.75\% accuracy (baseline 77.8\% on CIFAR-100). AutoRep improves the method by introducing parameterized discrete indicator function, hysteresis loop update function, and distribution-aware polynomial approximation~\cite{peng2023autorep}, which improves the accuracy to 75.48\%. 

\par Though ReLU replacement can significantly reduce the computational overhead of PANN, two factors reduce the accuracy of obtained PANNs in this process. First, the structure of the pre-trained model is different from PANN. However, the pre-training parameters are not resistant to perturbations caused by structural changes. Although this effect can by reduced by fine-tuning, there is still much room for improvement. Second, the activation functions during backpropagation and the activation functions in the final obtained models are also different. This further reduces PANN's accuracy. 
\par Actually, the two factors point to the same problem that the model is not resistant to approximation errors (we say ``poor sturdiness”). While our solutions, which increase model sturdiness based on general structures, can serve as a supplementary to ReLU replacement schemes. By taking our models as pre-trained and teacher models, the performance of SNL and AutoRep with different ReLU counts can all be further improved. 

\subsection{ReLU Protocol based on Truncation}

Bicoptor~\cite{zhou2023bicoptor} first proposed an efficient three-party computation (3PC) protocol for ReLU based on truncation and Bicopter2.0~\cite{zhou2023bicoptorv2} further improved it. For a fix-point input $x$ with $l_x$ bit length, the protocol will right-shift the complement code of $x$ for $l_x$ times with truncation and get $l_x$ shares. By securely checking if zero appears in these shares, the protocol will determine the sign of $x$ (i.e., $x = 0b00001010$ right shift 4 bit will get $0$) and enable the computing of $\operatorname{ReLU}(x)$. Since this process involves transferring $lx$ truncation results, the communication cost will increase with $l_x$ ($(+2)l$ bit for Bicoptor, $(l_x+1)(l_x+1)$ for Bicoptor2.0). 

\par This means choosing a suitable (small) $l_x$ in ReLu protocol can effectively reduce the communication overhead at the cost of computation precision. We considered this error also a kind of approximation error. Though it's different from polynomial approximation, the effects is still in line with the results of our analyses and can apply our solutions.

\subsection{Deep Learning with Differential Privacy}
Differential privacy (DP) protects the privacy of individual tuples in a database~\cite{dwork2006calibrating,duchi2013local,dwork2014algorithmic,mironov2017renyi}. It ensures that attackers cannot infer the membership of any tuple from the released data while the statistical properties are preserved for usage. By applying DP in machine learning, models can be trained without exposing sensitive information~\cite{abadi2016deep,wei2020federated,du2023dp,yang2023privatefl}. DP-based machine learning does not require time-consuming cryptographic computation and thus is highly practical. However, several serious issues are involved. Firstly, the statistics information of the dataset is also valuable. Especially when the data collection is costly, data owners may be reluctant to disclose any information. Secondly, DP cannot absolutely protect user privacy. Attackers can still get information about some users, though with a low probability. Thirdly, DP aims to hide data in a database, which should contain a large amount of data. This requirement cannot be satisfied in scenarios like privacy-preserving inference. 

\par While cryptographic methods can protect the information of all inputs (both statistical information and individual tuples) and have no requirements for the dataset. In more detail, cryptographic schemes can ensure that every ciphertext in the computation process does not reveal the plaintext information if the attacker doesn't have the key. Cryptographic techniques such as FHE and MPC can compute the data in the ciphertext format, and the result is generated and stored in ciphertext, which can only be revealed after decryption.

\subsection{Side-effects of Weight Decay}

Weight decay is a powerful regularization technique that has been very widely used in training deep neural networks. It can improve model generalization and accuracy~\cite{zhang2018three}. However, recent studies have shown that this benefit comes with side effects~\cite{ziyin2022exact, xie2024overlooked}. Weight decay has been proved to increase bad minima when applied in training neural networks~\cite{ziyin2022exact}. Further study has proved that weight decay can lead to large gradient norms at the final phase of training, which often indicates bad convergence and poor generalization~\cite{xie2024overlooked}. This large gradient norm problem is significant in the presence of scheduled or adaptive learning rates and has been recognized as harmful to neural networks. In our study, we have also confirmed that weight decay reduces the sturdiness of the model, which to some extent, supports these arguments.

\section{Preliminary}
\subsection{Notation}
\par We denote a neural network as $\mathbb{F}$, benign inputs and label as ($x$, $y$). we use $\widetilde{\cdot}$ (i.e., $\widetilde{y}$) to represent the values in PANN (with approximation errors). We use $W_i$ to denote the $i$-th row vector of a matrix $W$. A small value is represented by $\epsilon$. 
The error bound for approximation is $2^{-\beta}$ determined by $\beta$. The approximation of $\operatorname{sgn}$ and $\operatorname{ReLU}$ are denoted by $\operatorname{Appsgn}$ and $\operatorname{AppReLU}$ respectively.

\subsection{Polynomial Approximation of Neural Networks}\label{sec:PANNs}
\par Polynomial Approximation of Neural Networks (PANN) is the neural network replacing all (or part of) non-polynomial functions with polynomial approximations. 
Common approaches for approximation in PANNs include low-degree polynomials, Taylor polynomials, Minimax approximation, etc. Among these methods, low-degree polynomial is the fastest but yields low accuracy, thus generally requires customized model structures and fine-tuning models directly on PANN. In contrast, Minimax approximation with high-degree polynomials is slow but can yield negligible approximation errors, which can achieve PANN with the same accuracy as backbones. 

\par In this paper, we refers to $p(z)$ as a polynomial approximation of a function $g(z)$. The approximation error of this approximation is defined as:
\begin{equation}
    \epsilon_{g}(z) = p(z)-g(z)
\end{equation}

\subsubsection{Simple approximation with low-degree polynomials}

\par Since training directly on PANN with simple polynomials~\cite{gilad2016cryptonets, chou2018faster} is not an approximation of the existing model structure and parameters (discussed in Section \ref{sec:related_app}) and PANNs replacing all nonlinear functions are usually targeting on high-degree polynomials or precise approximation~\cite{cryptoeprint:2023/162,baruch2023training}. We only introduce ReLU replacement schemes which replace a partial of non-polynomial functions~\cite{cho2022selective, peng2023autorep}.

\par ReLU replacement aims at replacing a partial of ReLU activation functions with polynomial functions such as $p(z)=z$~\cite{cho2022selective} and $p(z)=0.14z^{2}+0.5z+0.28$~\cite{peng2023autorep}. Given a pre-trained model, this process can be achieved by the following three steps: (1) Replace all activation functions (ReLU) $g(z)$ in it with: 
\begin{equation}
    \sigma(z) =c\cdot g(z) + (1-c)\cdot p(z)
\end{equation}
(2) Train the model to reduce the value of $|c|$ by designing a proper loss function, while fine-tuning the pre-trained weights to suit the new activation functions; (3) Binarize $c$ depending on if the value $c$ satisfies some conditions (e.g.: if $c$ is smaller than a threshold value), and get the final activation function equal to $g(z)$ or $p(z)$; (4) Freeze $c$ and fine-tune the model based on the obtained activation functions.

\par ReLU replacement schemes will repeat the above steps until the ReLU counts are reduced to a certain number. This process results in two problems. First, the pre-trained model parameters were designed for the original structure using ReLU functions, and it was never intended to use polynomials from the beginning. Even with fine-tuning, it is difficult to achieve the same accuracy as the baseline (especially when the number of ReLUs is very small).

Second, the model weights are fine-tuned for the function $g'(z) =c\cdot g(z) + (1-c)\cdot p(z)$, while the activation function in the final result is $g(z)$ or $p(z)$. This introduces an approximation error:
    \begin{equation}
        \epsilon_g(z) = \left\{
    \begin{aligned}
    &(1-c)\cdot g(z) - (1-c)\cdot p(z), &\sigma(z)=g(z)\\
    &-c\cdot g(z) + c\cdot p(z), &\sigma(z)=p(z)\\
    \end{aligned}
    \right.
    \end{equation}
These two factors together result in the accuracy reduction in the obtained PANN, and they both indicate the same issue: the model is not sufficiently resistant to approximation errors, which we call poor “sturdiness”.

\subsubsection{Precise approximation with Minimax approximation}
\par In Minimax approximation, $p(z)$ is composed by small degree polynomials: $p = p_{k}\circ p_{k-1}\circ \dots \circ p_{1}$. The degree and parameters of small-degree polynomials are chosen according to approximation precision. The approximation precision is usually determined by a parameter $\beta$, which means that the absolute value of approximation error $\epsilon_{g}(z)$ is bounded by $2^{-\beta}$: $|\epsilon_{g}(z)|<2^{-\beta}$. An $\beta$-close polynomial approximation of a function $g(z)$ satisfies:
\begin{equation} \label{eq:app_function}
    |p(z)-g(z)|\leq 2^{-\beta}
\end{equation}

\begin{figure}[h]
\subfloat{\includegraphics[width=0.5\columnwidth]{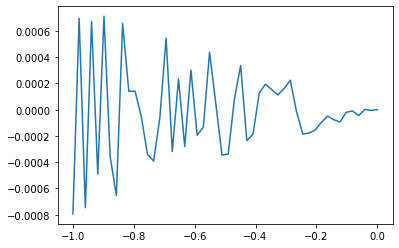}}
      \hfill
\subfloat{\includegraphics[width=0.465\columnwidth]{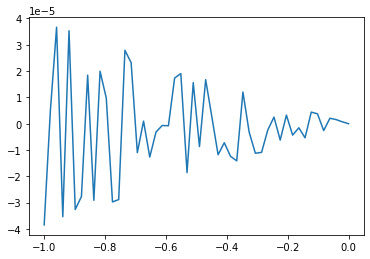}}
    \caption{Approximation errors for the Minimax approximation of $\operatorname{sgn}(z)$ on $[-1,-\epsilon]$ with precision $2^{-10}$ (left) and $2^{-14}$ (right)}
    \label{fig:approxsgn}
\end{figure}
When used in cryptographic schemes, parameter $\beta$ (or the corresponding ) is an important indicator of computation cost. Because a large $\beta$ (high precision) requires a $p(z)$ with a large degree, which involves a large amount of multiplication in cryptography computation. This phenomenon is extremely significant in HE-based schemes since successive multiplications require bootstrapping (decryption and re-encryption in ciphertext), which is the most expensive part in HE. Therefore, reducing the effects of approximation errors and thus allowing for low precision (or degree) is an important topic.

\par For most PANN schemes, activation functions, especially the ReLU function, are the main part requiring approximation and naturally are the main cause of approximation errors. Since the ReLU function itself is hard to approximate in Minimax directly, it's typically approximated by the sgn function using the formula:
\begin{equation} \label{eq:sng_to_relu}
    \operatorname{AppReLU}(z) = \frac{z+z\operatorname{Appsgn}(z)}{2}
\end{equation}

\par Define the error for approximating the function $\operatorname{sgn}(z)$ as:
\begin{equation}
    \epsilon_{sgn}(z) = \operatorname{Appsgn}(z) - \operatorname{sgn}(z)
\end{equation}
Then the approximation error for ReLU approximation is:
\begin{equation} \label{eq:error_sgntorelu}
        \epsilon_{relu}(z) = 
        \frac{\epsilon_{sgn}(z)  z}{2}
\end{equation}

If the $\operatorname{Appsgn}$ satisfies $\beta$-close that $|\epsilon_{sgn}| < 2^{-\beta}$, we can get:
\begin{equation} \label{eq:errorlimit_sgntorelu}
    |\epsilon_{relu}(z)| < 2^{-\beta}|z|
\end{equation}
Equation \ref{eq:errorlimit_sgntorelu} illustrates that the magnitude of approximation errors is proportional to the magnitude of $z$ (the input to ReLU).

\subsection{Mixup}
Mixup is the method that introduces interpolated samples to the training set \cite{zhang2018mixup}, which was proved to help model robustness and generalization \cite{zhang2021does}. For parameter $\lambda$ and samples $(x,y)$ and $(x', y')$ randomly selected from the training set, the interpolated sample is:
\begin{equation} \label{eq:mixup_input}
        \widetilde{x} = \lambda x + (1-\lambda) x',\quad
        \widetilde{y} = \lambda y + (1-\lambda) y'
\end{equation}

\begin{table*}[ht]
\caption{Accuracy of PANN on ResNet-20, Shufflenetv2, DLA-34, Mobilenetv2 trained with weight decay (wd) 0, 1e-4 and 5e-4. We give Top-1 accuracy of the backbone (baseline) and PANN with precision $2^{-\beta}$ ($2^{-8}$ and $2^{-9}$). Training methods include Vanilla, Mixup, and robustness-based solution NGNV. The results with the highest accuracy for each model and approximation precision are bolded.}
\label{tb:results}
\centering
\begin{tabular}{c|c|ccc|ccc|ccc|ccc}
\hline
 &  & \multicolumn{3}{c|}{Vanilla} & \multicolumn{3}{c|}{Mixup} & \multicolumn{3}{c|}{NGNV} & \multicolumn{3}{c}{Mixup+NGNV} \\ \hline
 & \diagbox[width=4em]{$\beta$}{wd} & 0 & 1e-4 & 5e-4 & 0 & 1e-4 & 5e-4 & 0 & 1e-4 & 5e-4 & 0 & 1e-4 & 5e-4 \\ \hline
\multirow{3}{*}{ResNet-20} & 8 & 86.84 & 65.70 & 12.07 & 87.91 & 76.95 & 11.77 & 87.77 & 75.33 & 11.41 & {\ul \textbf{88.79}} & 78.82 & 12.53 \\
 & 9 & 89.60 & 89.69 & 58.81 & 90.52 & 89.67 & 43.23 & 90.61 & 90.43 & 73.82 & {\ul \textbf{91.02}} & 90.48 & 47.67 \\ \cline{2-14} 
 & Baseline & 90.39 & 91.57 & 92.14 & 91.30 & 91.60 & 92.17 & 91.10 & 91.70 & 92.49 & 91.49 & 91.59 & 92.16 \\ \hline
\multirow{3}{*}{Shufflenetv2} & 8 & 16.11 & 12.25 & 11.94 & 11.26 & 12.35 & 13.23 & {\ul \textbf{84.44}} & 20.96 & 10.09 & 72.54 & 31.62 & 13.20 \\
 & 9 & 58.60 & 32.22 & 32.08 & 36.35 & 24.29 & 25.41 & {\ul \textbf{87.65}} & 85.36 & 23.15 & 84.10 & 81.57 & 28.02 \\ \cline{2-14} 
 & Baseline & 88.54 & 90.22 & 88.60 & 89.08 & 90.19 & 87.62 & 88.66 & 90.69 & 89.60 & 89.10 & 90.54 & 88.32 \\ \hline
\multirow{3}{*}{DLA-34} & 8 & 91.69 & 69.02 & 12.78 & 90.45 & 58.01 & 13.23 & 92.10 & 50.94 & 13.58 & {\ul \textbf{93.07}} & 52.80 & 13.45 \\
 & 9 & 92.28 & 88.79 & 45.67 & 93.52 & 87.00 & 36.80 & 93.64 & 91.94 & 38.73 & {\ul \textbf{94.82}} & 91.92 & 26.46 \\ \cline{2-14} 
 & Baseline & 93.43 & 94.38 & 95.10 & 94.96 & 95.12 & 95.41 & 93.85 & 94.92 & 95.21 & 95.09 & 95.62 & 95.58 \\ \hline
\multirow{3}{*}{Mobilenetv2} & 8 & 31.44 & 11.98 & 10.78 & 15.79 & 14.05 & 10.64 & 77.04 & 20.67 & 11.64 & {\ul \textbf{91.40}} & 13.89 & 10.34 \\
 & 9 & 58.51 & 26.74 & 15.41 & 12.07 & 50.69 & 14.07 & 82.77 & 82.41 & 13.67 & {\ul \textbf{92.80}} & 68.11 & 12.17 \\ \cline{2-14} 
 & Baseline & 92.69 & 93.49 & 91.45 & 93.60 & 93.74 & 90.50 & 92.97 & 93.58 & 91.84 & 93.68 & 94.03 & 91.58 \\ \hline
\end{tabular}
\end{table*}

\section{Improving low-precision PANN's accuracy}

\par This section introduces our methods for improving the ``sturdiness'' of neural networks and thus improving low-precision PANN's accuracy.


\subsection{Negative Inputs Leakage Method}
We propose a solution to enhance ``sturdiness'' based on Theorem \ref{th:pos_neg_loss} that approximation errors on the negative inputs of ReLU increase more loss than those on positive inputs. Intuitively, because some gradients stop updating due to negative inputs to ReLU having zero derivative, we can leak the information of ReLU's negative inputs to the next layer and can guide the training to approach the stationary point of PANN, thus increasing the sturdiness.

\par Formally, for a neural network with layered architectures, the input layer is numbered $0$, the output layer is numbered $H$, and the hidden layers $h_l$ are numbered $1$, $\dots, H-1$. We denote $z_{l}$ as the input at layer $l$ before the activation function and $z_{H}$ is the output of the neural network, $x_l$ as the vector after application of the activation function (to $z_{l-1}$) and $x_0=x$ is the neural network input. For normal models, $z_{l} = h_l(x_l)$ and $x_{l+1}=\operatorname{ReLU}(z_{l})$. For our method, we use $\overline{\mathbb{F}}(x)$ to represent the $\mathbb{F}(x)$ replacing all $x_{l+1}$ with $\overline{x}_{l+1}=\operatorname{z_l} + \epsilon_{in,l}$, use $\beta'$ to represent the amplification due to the multiplication of errors with intermediate values. We are looking for a neural network that satisfies the following:
\begin{equation} \label{PGDAT}
        \inf_\theta  \mathbb{E}_{\mathbb{P}} [L(\overline{\mathbb{F}}(x),y)], \quad \forall ||\epsilon_{in,l}|| < 2^{-\beta+\beta'}
\end{equation}
\par When trying to implement this method, there are two considerations. First, approximation errors affect every input to ReLU. Second, approximation errors are generated in almost every layer. The wide-range perturbations after accumulated and amplified result in a much larger noise amplitude, which will affect the training and reduce the accuracy of the backbone model (baseline), which can limit the upper bound of PANN accuracy.

Therefore, we consider restricting perturbations only to specific positions by taking into account two specific properties.
First, as implied in Theorem \ref{th:pos_neg_loss}, approximation errors on ReLU's negative inputs are more harmful. 
Second, approximation errors are usually the polynomial of ReLU inputs $z$ such that the error amplitude is proportional to the amplitude of $z$. 
The two properties suggest that adding perturbations to negative values with large magnitudes is the most effective and has minimal effect on the backbone model's performance. 
\par With the reasons above, we design a \textbf{noise generator for negative values with large magnitude (NGNV)} in activation function inputs during the training phase. For the smallest $r$ (e.g.: 0.3) percent negative values in the ReLU inputs (the negative value with the largest magnitudes), we generate Gaussian noise $\epsilon_{gau}$ to simulate $\epsilon_{sgn}$ and multiply with these values. The products are then multiplied by a parameter $\lambda$ (e.g.: 0.05) and added to ReLU inputs. In other words, for a ReLU input $z$, we generate intra-model perturbations $\delta_{int}$ as:
\begin{equation} \label{eq:gaussian_gen}
    \delta_{int} = \lambda \epsilon_{gau} z, \quad \epsilon_{gau}  \leftarrow \mathcal{N}(0,1)
\end{equation}
\par Considering that the approximation error $\epsilon_{relu}$ has boundaries. We can extract signs from $\epsilon_g$ and simulate the worst case by setting a fixed $\lambda$. The process is:
\begin{equation} \label{eq:fixnoise_gen}
    \delta_{int} = \lambda \operatorname{sgn}(\epsilon_{gau}) z, \quad \epsilon_{gau}  \leftarrow \mathcal{N}(0,1)
\end{equation}

\subsection{Reduced Use of Weight Regularization}
As discussed in Theorem~\ref{th:loss_bound_wd_t}, weight regularization increases PANNs' test loss and reduces accuracy as training processes. One potential solution is to use minimal weight regularization combined with early stopping. However, it will lead to poor generalization, which can reduce the accuracy of backbone models (baseline) and ultimately limit PANN's accuracy by decreasing its upper bound.
\par To address this, we choose Mixup as a remedy method. Mixup~\cite{zhang2018mixup} introduces globally linear behavior in-between data manifolds. It can reduce unexpected behavior of neural networks and enhance generalization and robustness by expanding the input set. Our experiments show that Mixup can somewhat compensate for the reduced accuracy due to poor regularization. The test results based on minimal regularization and Mixup are presented in Section 5.2, which shows good performance on some model structures and precision. However, it is important to note that this approach is only effective when regularization is not crucial. For deeper models, the problem of low backbone accuracy under poor regularization is still challenging. Besides, Mixup may decrease PANN's accuracy for certain models, such as Shufflenetv2. Thus, we regard use of low weight regularization and Mixup as a trade-off.

\section{Experiments}

We conduct experiments to demonstrate the effectiveness of our solutions in improving sturdiness and the performance of PANN. The main results are shown in this section and additional experiment results can be found in Appendix~\ref{app:exp2}.

\subsection{Test on Normal Structure without Fine-tuning}\label{sec:ExperimentsNormal}

\begin{table}[!htb]
\caption{Accuracy of PANN with error bound $2^{-\beta}$, and the backbone (baseline) on CIFAR-100 (ResNet-32) and TinyImagenet (ResNet-18).}
\label{tb:results_2}
\centering
\resizebox{\columnwidth}{!}{
\begin{tabular}{c|c|ll|ll|ll}
\hline
\multicolumn{1}{l|}{} & \multicolumn{1}{l|}{} & \multicolumn{2}{c|}{Vanilla} & \multicolumn{2}{c|}{Mixup} & \multicolumn{2}{c}{Mixup+NGNV} \\ \hline
\multicolumn{1}{l|}{} & \diagbox[width=1.5cm]{$\beta$}{wd} & \multicolumn{1}{c}{0} & \multicolumn{1}{c|}{1e-4} & \multicolumn{1}{c}{0} & \multicolumn{1}{c|}{1e-4} & \multicolumn{1}{c}{0} & \multicolumn{1}{c}{1e-4} \\ \hline
\multirow{3}{*}{C100} & 8 & 39.52 & 3.85 & 59.71 & 8.87 & {\ul \textbf{63.49}} & 29.10 \\
 & 9 & 61.08 & 44.73 & 66.35 & 55.61 & {\ul \textbf{67.67}} & 63.50 \\
  \cline{2-8} 
 & Baseline & 65.08 & 68.88 & 67.92 & 70.96 & 68.85 & 70.32 \\ \hline
\multirow{3}{*}{\begin{tabular}[c]{@{}c@{}}Tiny-\\      Imagenet\end{tabular}} & 8 & 18.34 & 0.73 & 20.80 & 0.52 & {\ul \textbf{43.85}} & 1.88 \\
 & 9 & 42.86 & 4.54 & 45.05 & 0.85 & {\ul \textbf{55.60}} & 23.41 \\ \cline{2-8} 
 & Baseline & 56.62 & 59.62 & 58.78 & 62.01 & 59.42 & 61.30 \\ \hline
\end{tabular}
}
\end{table}

\par \textbf{Evaluation Setup: }We conduct tests on PANN with ResNet \cite{he2016deep}, DLA \cite{yu2018deep}, MobilenetV2 \cite{sandler2019mobilenetv2}, and ShufflenetV2 \cite{Ma_2018_ECCV} models for classification tasks on CIFAR-10. 
We also test ResNet on CIFAR-100 and Tiny Imagenet dataset \cite{krizhevsky2009learning,Le2015TinyIV}. Training on CIFAR dataset takes 200 epochs, and on Tiny Imagenet takes 150 epochs. All learning rates begin with 0.1, and were multiplied by 0.1 on epochs 100 and 150 on CIFAR dataset, multiplied by 0.1 on epochs 50 and 100 on Tiny Imagenet. The momentum is 0.9. We test weight decay on 0, 0.0001, and 0.0005. Note that the ResNet-18 models removed the Maxpool layer to suit small images. $\lambda$ for Mixup are selected from distribution $\operatorname{Beta}(0.5,0.5)$. 
The approximation intervals are $[-50,50]$ for CIFAR and $[-100,100]$ for Tiny Imagenet, except the case values exceed the approximation interval. The average accuracy of the backbone models and PANN with precision $2^{-8}$ and $2^{-9}$ on CIFAR-10 dataset are presented in Table \ref{tb:results}. The experiments on CIFAR-100 (C100) and Tiny Imagenet are shown in Table \ref{tb:results_2}. The highest accuracy for each model and approximation precision are highlighted in bold.

\par \textbf{Effects of Weight Regularization:} 
From Table \ref{tb:results} and Table \ref{tb:results_2}, we observe that weight regularization can severely destroy the model's resistance to approximation errors in various models and precisions. For example, PANN on models trained with weight decay 0 outperforms those with weight decay 5e-4 in nearly all vanilla models, suggesting minimal weight decay. Note that this discipline doesn't always hold. Poor regularization caused by no regularization can limit backbone models' accuracy, thus reducing PANN's accuracy by limiting the upper bound. For example, ResNet-20 models with weight decay 1e-4 can outperform those with zero weight decay in precision $2^{-9}$.

\par \textbf{Evaluation of Minimal Regularization with Mixup:} Table \ref{tb:results} and Table \ref{tb:results_2} shows that Mixup can effectively compensate for the lack of generalization and reduced backbone accuracy caused by poor regularization. Therefore, models can benefit from strong resistance to approximation errors which come with minimal weight regularization. This method can improve PANN performance in many cases. For example, Mixup can outperform vanilla models on precision $2^{-9}$ with zero weight decay in ResNet-20 and DLA-34. However, this advantage may not always hold. In some cases, Mixup might be harmful to PANN (e.g.: DLA-34, weight decay 0, precision $2^{-8}$). 

\par \textbf{Evaluation of Negative Inputs Leakage Solution:} Table \ref{tb:results} and Table \ref{tb:results_2} prove that NGNV (or NGNV with Mixup), can achieve the best accuracy in all precisions (bolded). It can also improve PANN accuracy significantly in almost all cases, even if the weight decay is large. For instance, on Shufflenetv2 and Mobilenetv2, our models can improve PANN accuracy by up to 60\% on weight decay 0 and precision $2^{-8}$, by over 40\% on weight decay 1e-4 and precision $2^{-9}$.

\par \textbf{Cross-work comparison:} Table \ref{tb:cross_com} and \ref{tb:cross_com2} provide a cross-work comparison for different schemes. Note that due to the large amount of fine-tuning and structural adjustment involved in the different schemes, it's difficult to assess the magnitude of errors in them. So we only present the highest accuracy obtained. And in this experiment, our scheme uses the same approximation precision as MPCNN~\cite{lee2022low}. The results indicate that though our models are trained on the original backbone structure without fine-tuning with approximation functions, they can achieve nearly the same accuracy as the baseline when loaded in PANN. In contrast, model modification and fine-tuning are necessary for other high-accuracy schemes (e.g., reducing the number of layers containing activation functions). \footnote{Note that our method has the potential to be integrated with these fine-tuning schemes and enhance their performance, but due to the limited open-source material available in this specific area, we have only conducted tests using a subset of the available schemes.}


\begin{table}[!htb]
\caption{Best accuracy of PANN on CIFAR-10 dataset. ``Fine-tuned" means if the model is fine-tuned on the PANN. ``Self-defined" means the model structure is designed for PANN specifically and cannot be identified as common-used structures. }
\label{tb:cross_com}
\centering
\resizebox{\columnwidth}{!}{
\begin{tabular}{c|l|c|c|c}
Model & \multicolumn{1}{c|}{Schemes} & \multicolumn{1}{l|}{Fine-tuned} & \multicolumn{1}{l|}{Accuracy} & \multicolumn{1}{l}{Baseline} \\ \hline
\multirow{3}{*}{ResNet20}  & AutoFHE~\cite{cryptoeprint:2023/162} & Yes & 92.66 & 92.89 \\
& MPCNN~\cite{lee2022low} & No & 91.39 & 91.52 \\
 & Ours & No & 92.05 & 92.02 \\ \hline
\multirow{3}{*}{ResNet56}  & AutoFHE~\cite{cryptoeprint:2023/162} & Yes & 93.27 & 93.49 \\
& MPCNN~\cite{lee2022low} & No & 93.12 & 93.26 \\
 & Ours & No & 93.77 & 93.81 \\ \hline
\multicolumn{1}{l|}{ResNet56-Modified} & LSPCNN~\cite{baruch2023training} & Yes & 93.27 & 97.80 \\ \hline
\multirow{2}{*}{Self-defined} & HCNN~\cite{al2020towards} &  & 77.55 & \multicolumn{1}{l}{} \\
 & nGraph-HE~\cite{boemer2019ngraph} & \multicolumn{1}{l|}{} & 62.10 & \multicolumn{1}{l}{}
\end{tabular}
}
\end{table}
\begin{table}[!htb]\small
\caption{Best accuracy of PANN on CIFAR-100 dataset}
\label{tb:cross_com2}
\centering
\resizebox{\columnwidth}{!}{
\begin{tabular}{c|l|c|c|c}
Model & \multicolumn{1}{c|}{Schemes} & Fine-tuned & Accuracy & Baseline \\ \hline
\multirow{3}{*}{ResNet32} & AutoFHE~\cite{cryptoeprint:2023/162} & Yes & 70.81 & 71.34 \\
& MPCNN~\cite{lee2022low} & No & 69.50 & 69.43 \\
& Ours & No & 70.72 & 70.96
\end{tabular}
}
\end{table}

\begin{figure}[t]
        \includegraphics[width=0.9\columnwidth]{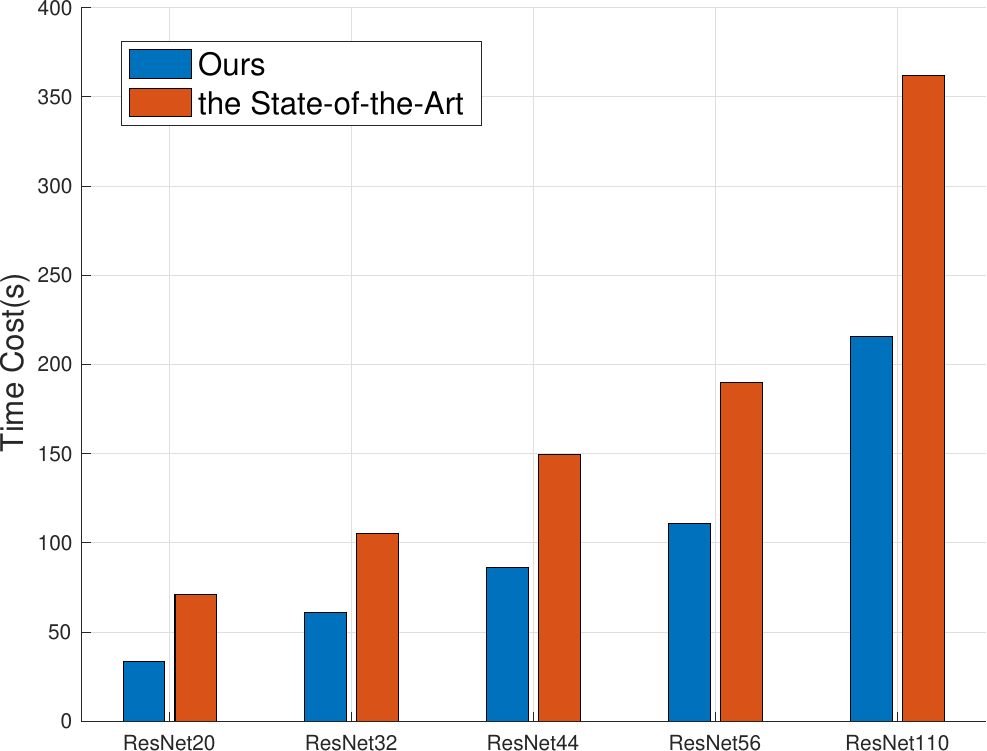}
    \caption{Time cost for PANN with our models and the non-fine-tuning SOTA~\cite{lee2021privacypreserving,  lee2022low} (to achieve the SOTA accuracy on CIFAR-10).} 
\label{fig:Introduction2}
\end{figure}
\par In a word, by combining these methods, models can exhibit strong ``sturdiness'' to approximation errors. We can significantly improve low-precision PANN's accuracy and thus reduce the time cost. 
Table \ref{tab:TimeCost} illustrates some time costs for PANN with different precision on different models. Previous schemes require at least $2^{-12}$ precision to achieve satisfactory accuracy, but our method only needs $2^{-9}$ (Kindly refer to the appendix for the selection of parameters). The time required for PANN with our models to achieve the state-of-the-art \cite{lee2021privacypreserving,  lee2022low} accuracy is given in Figure \ref{fig:Introduction2}. Our solution can reduce the PANN time cost by 40\% to 60\% compared to models provided by these schemes. This improvement makes it easier to design cryptographic schemes for various scenarios.

\subsection{Test on ReLU Replacement Schemes}\label{sec:Test_ReRe}

Our method is based on standard structures without fine-tuning (mentioned in Section \ref{sec:introoverview}), which ensures our model can benefits other schemes by improving sturdiness. For instance, our models can serve as pre-trained and teacher models (for distillation) to enhance the accuracy of ReLU Replacement schemes SNL~\cite{cho2022selective} and AutoReP~\cite{peng2023autorep}. 

\begin{table}[!htb]\small
\caption{The accuracy (\%) of SNL, AutoReP, and the two schemes with our pre-trained model. (CIFAR-100, ResNet-18, baseline: 77.9\% for ours, 77.8\% for SNL and AutoReP)}
\label{tb:relurep}
\centering
\begin{tabular}{c|c|c}
\multicolumn{1}{l|}{} & ReLUs(K) & Accuracy(\%) \\ \hline
\multirow{2}{*}{Sphynx~\cite{cho2022selective}} & 51.2 & 69.57 \\
 & 25.6 & 66.13 \\ \hline
\multirow{2}{*}{DeepReduce~\cite{jha2021deepreduce}} & 49.2 & 69.5 \\
 & 12.3 & 64.97 \\ \hline
\multirow{3}{*}{SNL~\cite{cho2022selective}} & 49.9 & 73.75 \\
 & 24.9 & 70.05 \\
 & 15.0 & 67.17 \\ \hline
\multirow{3}{*}{SNL+Ours} & 49.9 & 74.46 \\
 & 24.9 & 71.18 \\
 & 15.0 & 68.62 \\ \hline
\multirow{3}{*}{AutoReP~\cite{peng2023autorep}} & 50.0 & 75.48 \\
 & 12.9 & 74.92 \\
 & 6.0 & 73.79 \\ \hline
\multirow{3}{*}{AutoReP+Ours} & 50.0 & 78.27 \\
 & 12.9 & 77.91 \\
 & 6.0 & 77.81
\end{tabular}
\end{table}

\par Table \ref{tb:relurep} presents the performance of AutoReP and SNL with (or without) our pre-trained ResNet-18 model on CIFAR-100 dataset. We also give the accuracy of Sphynx~\cite{cho2022selective}, DeepReduce~\cite{jha2021deepreduce} for comparison. Our models are trained on the backbone structure with weight decay 1e-4 for 200 epochs. Learning rates begin with 0.1, multiplied by 0.1 on epochs 100 and 150. The table shows that our models can enhance both the accuracy of both SNL and AutoRep. For example, we can improve the accuracy of AutoRep with 50K, 12.9K, and 6K ReLUs to near the upper bound (baseline).

\begin{table}[!htb]\small
\caption{The accuracy (\%) of ReLU replacement schemes SNL, AutoReP with models pre-trained and fine-tuned with different weight decay (wd) (CIFAR-100, ResNet-18)}
\label{tb:relurep_wd}
\centering
\begin{tabular}{c|ccc|c}
wd & 0 & 1e-4 & 5e-4 & ReLUs(K) \\ \hline
\multirow{2}{*}{SNL} & 70.6 & 73.3 & 70.38 & 49.90 \\
 & 68.48 & 71.17 & 68.33 & 24.90 \\ \hline
\multirow{2}{*}{AutoRep} & 72.95 & 73.44 & 75.27 & 50.00 \\
 & 69.43 & 73.32 & 73.97 & 12.90 \\ \hline
baseline & 71.81 & 74.5 & 77.66 & 
\end{tabular}
\end{table}
\par We also give the accuracy of SNL and AutoReP with models pre-trained and fine-tuned with different weight decay in Table \ref{tb:relurep_wd}. It shows that approximations with 1-degree (in SNL) and 2-degree (in AutoRep) polynomials also comply with Theorem~\ref{th:loss_bound_wd_t}, demonstrating that weight decay enlarges the loss increment due to approximation errors. Note that this improvement is achieved without using our training methods in fine-tuning, which means it can be further improved.



\subsection{Test on Fixed-Point Truncation-based ReLU}
\begin{table}[!htb]\small
\caption{The accuracy (\%) of fix-point truncation-based ReLU computation (CIFAR-100, ResNet-18)}
\label{tb:fix_mpc}
\centering
\begin{tabular}{c|cc|cc}
 & \multicolumn{2}{c|}{Vanilla} & \multicolumn{2}{c}{Ours} \\ \hline
wd & 1e-4 & 5e-4 & 1e-4 & 5e-4 \\ \hline
baseline & 74.80 & 77.82 & 78.63 & 78.85 \\ \hline
lx=6 & 74.39 & 73.09 & 78.33 & 75.98 \\
lx=8 & 74.64 & 76.58 & 78.38 & 78.22 \\
lx=10 & 74.74 & 77.57 & 78.38 & 78.71 \\
lx=12 & 74.77 & 77.68 & 78.57 & 78.83 \\
lx=14 & 74.77 & 77.80 & 78.64 & 78.88 \\
lx=16 & 74.80 & 77.80 & 78.65 & 78.84
\end{tabular}
\end{table}

The fixed-point bits ($l_x$) of ReLU's inputs are positively correlated with communication overhead in certain MPC schemes due to bit-wise operations such as truncation. By choosing a small $lx$, MPC can significantly reduce communication costs at the expense of accuracy. Our scheme's increasing model sturdiness can effectively mitigate the accuracy reduction resulting from a small $l_x$. 

\par Table \ref{tb:fix_mpc} shows the accuracy on models with different ReLU inputs bits. The model training uses the same setting as Section \ref{sec:Test_ReRe}. And we apply the computation methods in~\cite{zhou2023bicoptor}. We set the first half of the bits in lx represent integers and the second half represent decimals. 

\par The table shows that this case still experiencing external losses due to weight decay. Though the large weight decay outperforms the small weight decay in some precision because of the baseline difference, the reduced accuracy in weight decay 1e-4 is still smaller than that in 5e-4. Furthermore, by applying our methods in training backbone, models can achieve the desired accuracy at much lower precision, ultimately reducing communication overhead.

\section{Conclusions}
We uncovered crucial insights into the nature of ``sturdiness’’, a notion we introduced as a network's resistance to approximation errors. 
We proposed the theorems regarding how approximation errors increase the test loss on PANN and give constraints on the relevant boundaries.

Based on these theorems, we have explored the factors that will affect the model ``sturdiness’’ and drawn two interesting conclusions. 
First, we found that approximations on ReLU’s negative inputs result in more testing loss compared to approximations on positive inputs, based on which we developed NGNV solution that can enhance ``sturdiness" with minimal effect on the backbone model’s performance
Second, we discovered that weight regularization negatively affects ``sturdiness’’, suggesting applying minimal weight regularization and exploring alternative generalization strategies. By combining the methods above, we can significantly improve low-precision PANN’s accuracy, thus reducing time costs.

\par We believe this work represents an important step for advancements in PANNs. Our findings will encourage continued research, which we believe will lead to more efficient and effective PPML.

Beyond these contributions, we also leave some open questions for subsequent research. Firstly, our theorems only target small errors (near zero). Due to the complex accumulation and amplification of noise in the neural networks, approximation errors in some precision (i.e., $\epsilon \leq 2^{-8}$) will lead to results inconsistent with our theorem. Finding a more general mathematical model to address this issue and help with analysis is important.
Secondly, similar to AT \cite{rice2020overfitting,yu2022understanding}, NGNV suffers from a form of ``overfitting". A model may perform better beyond some precision but have worse accuracy on lower precision (i.e. models trained with small noise may perform poorly on approximation with extremely large errors). How to make models perform well in every accuracy still needs further research.

\par Overall, this research has opened up a new field and made a great contribution to PPML by providing powerful tools and new directions for the future development of PPML.

\bibliographystyle{IEEEtran}
\bibliography{main}

\begin{thebibliography}{10}
\providecommand{\url}[1]{#1}
\csname url@samestyle\endcsname
\providecommand{\newblock}{\relax}
\providecommand{\bibinfo}[2]{#2}
\providecommand{\BIBentrySTDinterwordspacing}{\spaceskip=0pt\relax}
\providecommand{\BIBentryALTinterwordstretchfactor}{4}
\providecommand{\BIBentryALTinterwordspacing}{\spaceskip=\fontdimen2\font plus
\BIBentryALTinterwordstretchfactor\fontdimen3\font minus \fontdimen4\font\relax}
\providecommand{\BIBforeignlanguage}[2]{{%
\expandafter\ifx\csname l@#1\endcsname\relax
\typeout{** WARNING: IEEEtran.bst: No hyphenation pattern has been}%
\typeout{** loaded for the language `#1'. Using the pattern for}%
\typeout{** the default language instead.}%
\else
\language=\csname l@#1\endcsname
\fi
#2}}
\providecommand{\BIBdecl}{\relax}
\BIBdecl

\bibitem{244032}
P.~Mishra, R.~Lehmkuhl, A.~Srinivasan, W.~Zheng, and R.~A. Popa, ``Delphi: A cryptographic inference service for neural networks,'' in \emph{29th USENIX Security Symposium}, 2020, pp. 2505--2522.

\bibitem{peng2023autorep}
H.~Peng, S.~Huang, T.~Zhou, Y.~Luo, C.~Wang, Z.~Wang, J.~Zhao, X.~Xie, A.~Li, T.~Geng \emph{et~al.}, ``Autorep: Automatic relu replacement for fast private network inference,'' in \emph{Proceedings of the IEEE/CVF International Conference on Computer Vision}, 2023, pp. 5178--5188.

\bibitem{baruch2023training}
M.~Baruch, N.~Drucker, G.~Ezov, Y.~Goldberg, E.~Kushnir, J.~Lerner, O.~Soceanu, and I.~Zimerman, ``Training large scale polynomial cnns for e2e inference over homomorphic encryption,'' 2023.

\bibitem{gilad2016cryptonets}
R.~Gilad-Bachrach, N.~Dowlin, K.~Laine, K.~Lauter, M.~Naehrig, and J.~Wernsing, ``Cryptonets: Applying neural networks to encrypted data with high throughput and accuracy,'' in \emph{International conference on machine learning}.\hskip 1em plus 0.5em minus 0.4em\relax PMLR, 2016, pp. 201--210.

\bibitem{lee2022low}
E.~Lee, J.-W. Lee, J.~Lee, Y.-S. Kim, Y.~Kim, J.-S. No, and W.~Choi, ``Low-complexity deep convolutional neural networks on fully homomorphic encryption using multiplexed parallel convolutions,'' in \emph{International Conference on Machine Learning}.\hskip 1em plus 0.5em minus 0.4em\relax PMLR, 2022, pp. 12\,403--12\,422.

\bibitem{lee2021precise}
J.~Lee, E.~Lee, J.-W. Lee, Y.~Kim, Y.-S. Kim, and J.-S. No, ``Precise approximation of convolutional neural networks for homomorphically encrypted data,'' 2021.

\bibitem{al2020towards}
A.~Al~Badawi, C.~Jin, J.~Lin, C.~F. Mun, S.~J. Jie, B.~H.~M. Tan, X.~Nan, K.~M.~M. Aung, and V.~R. Chandrasekhar, ``Towards the alexnet moment for homomorphic encryption: Hcnn, the first homomorphic cnn on encrypted data with gpus,'' \emph{IEEE Transactions on Emerging Topics in Computing}, vol.~9, no.~3, pp. 1330--1343, 2020.

\bibitem{boemer2019ngraph}
F.~Boemer, Y.~Lao, R.~Cammarota, and C.~Wierzynski, ``ngraph-he: a graph compiler for deep learning on homomorphically encrypted data,'' in \emph{Proceedings of the 16th ACM international conference on computing frontiers}, 2019, pp. 3--13.

\bibitem{lee2022optimization}
E.~Lee, J.-W. Lee, Y.-S. Kim, and J.-S. No, ``Optimization of homomorphic comparison algorithm on rns-ckks scheme,'' \emph{IEEE Access}, vol.~10, pp. 26\,163--26\,176, 2022.

\bibitem{lee2021high}
J.-W. Lee, E.~Lee, Y.~Lee, Y.-S. Kim, and J.-S. No, ``High-precision bootstrapping of rns-ckks homomorphic encryption using optimal minimax polynomial approximation and inverse sine function,'' in \emph{Advances in Cryptology--EUROCRYPT 2021, Part I 40}.\hskip 1em plus 0.5em minus 0.4em\relax Springer, 2021, pp. 618--647.

\bibitem{cryptoeprint:2023/162}
W.~Ao and V.~N. Boddeti, ``Autofhe: Automated adaption of cnns for efficient evaluation over fhe,'' in \emph{33rd USENIX Security Symposium}, 2024.

\bibitem{xie2024overlooked}
Z.~Xie, Z.~Xu, J.~Zhang, I.~Sato, and M.~Sugiyama, ``On the overlooked pitfalls of weight decay and how to mitigate them: A gradient-norm perspective,'' \emph{Advances in Neural Information Processing Systems}, vol.~36, 2024.

\bibitem{hesamifard2019deep}
E.~Hesamifard, H.~Takabi, and M.~Ghasemi, ``Cryptodl: Deep neural networks over encrypted data,'' \emph{arXiv preprint arXiv:1711.05189}, 2017.

\bibitem{chou2018faster}
E.~Chou, J.~Beal, D.~Levy, S.~Yeung, A.~Haque, and L.~Fei-Fei, ``Faster cryptonets: Leveraging sparsity for real-world encrypted inference,'' \emph{arXiv preprint arXiv:1811.09953}, 2018.

\bibitem{lee2021privacypreserving}
J.-W. Lee, H.~Kang, Y.~Lee, W.~Choi, J.~Eom, M.~Deryabin, E.~Lee, J.~Lee, D.~Yoo, Y.-S. Kim \emph{et~al.}, ``Privacy-preserving machine learning with fully homomorphic encryption for deep neural network,'' \emph{iEEE Access}, vol.~10, pp. 30\,039--30\,054, 2022.

\bibitem{lee2021minimax}
E.~Lee, J.-W. Lee, J.-S. No, and Y.-S. Kim, ``Minimax approximation of sign function by composite polynomial for homomorphic comparison,'' \emph{IEEE Transactions on Dependable and Secure Computing}, vol.~19, no.~6, pp. 3711--3727, 2021.

\bibitem{cho2022selective}
M.~Cho, A.~Joshi, B.~Reagen, S.~Garg, and C.~Hegde, ``Selective network linearization for efficient private inference,'' in \emph{International Conference on Machine Learning}.\hskip 1em plus 0.5em minus 0.4em\relax PMLR, 2022, pp. 3947--3961.

\bibitem{zhou2023bicoptor}
L.~Zhou, Z.~Wang, H.~Cui, Q.~Song, and Y.~Yu, ``Bicoptor: Two-round secure three-party non-linear computation without preprocessing for privacy-preserving machine learning,'' in \emph{2023 IEEE Symposium on Security and Privacy (SP)}.\hskip 1em plus 0.5em minus 0.4em\relax IEEE, 2023, pp. 534--551.

\bibitem{zhou2023bicoptorv2}
L.~Zhou, Q.~Song, S.~Zhang, Z.~Wang, X.~Wang, and Y.~Li, ``Bicoptor 2.0: Addressing challenges in probabilistic truncation for enhanced privacy-preserving machine learning,'' \emph{arXiv preprint arXiv:2309.04909}, 2023.

\bibitem{dwork2006calibrating}
C.~Dwork, F.~McSherry, K.~Nissim, and A.~Smith, ``Calibrating noise to sensitivity in private data analysis,'' in \emph{Theory of Cryptography: Third Theory of Cryptography Conference, TCC 2006, New York, NY, USA, March 4-7, 2006. Proceedings 3}.\hskip 1em plus 0.5em minus 0.4em\relax Springer, 2006, pp. 265--284.

\bibitem{duchi2013local}
J.~C. Duchi, M.~I. Jordan, and M.~J. Wainwright, ``Local privacy and statistical minimax rates,'' in \emph{2013 IEEE 54th annual symposium on foundations of computer science}.\hskip 1em plus 0.5em minus 0.4em\relax IEEE, 2013, pp. 429--438.

\bibitem{dwork2014algorithmic}
C.~Dwork, A.~Roth \emph{et~al.}, ``The algorithmic foundations of differential privacy,'' \emph{Foundations and Trends{\textregistered} in Theoretical Computer Science}, vol.~9, no. 3--4, pp. 211--407, 2014.

\bibitem{mironov2017renyi}
I.~Mironov, ``R{\'e}nyi differential privacy,'' in \emph{2017 IEEE 30th computer security foundations symposium (CSF)}.\hskip 1em plus 0.5em minus 0.4em\relax IEEE, 2017, pp. 263--275.

\bibitem{abadi2016deep}
M.~Abadi, A.~Chu, I.~Goodfellow, H.~B. McMahan, I.~Mironov, K.~Talwar, and L.~Zhang, ``Deep learning with differential privacy,'' in \emph{Proceedings of the 2016 ACM SIGSAC conference on computer and communications security}, 2016, pp. 308--318.

\bibitem{wei2020federated}
K.~Wei, J.~Li, M.~Ding, C.~Ma, H.~H. Yang, F.~Farokhi, S.~Jin, T.~Q. Quek, and H.~V. Poor, ``Federated learning with differential privacy: Algorithms and performance analysis,'' \emph{IEEE transactions on information forensics and security}, vol.~15, pp. 3454--3469, 2020.

\bibitem{du2023dp}
M.~Du, X.~Yue, S.~S. Chow, T.~Wang, C.~Huang, and H.~Sun, ``Dp-forward: Fine-tuning and inference on language models with differential privacy in forward pass,'' in \emph{Proceedings of the 2023 ACM SIGSAC Conference on Computer and Communications Security}, 2023, pp. 2665--2679.

\bibitem{yang2023privatefl}
Y.~Yang, B.~Hui, H.~Yuan, N.~Gong, and Y.~Cao, ``$\{$PrivateFL$\}$: Accurate, differentially private federated learning via personalized data transformation,'' in \emph{32nd USENIX Security Symposium (USENIX Security 23)}, 2023, pp. 1595--1612.

\bibitem{zhang2018three}
G.~Zhang, C.~Wang, B.~Xu, and R.~Grosse, ``Three mechanisms of weight decay regularization,'' in \emph{International Conference on Learning Representations}, 2018.

\bibitem{ziyin2022exact}
L.~Ziyin, B.~Li, and X.~Meng, ``Exact solutions of a deep linear network,'' \emph{Advances in Neural Information Processing Systems}, vol.~35, pp. 24\,446--24\,458, 2022.

\bibitem{zhang2018mixup}
\BIBentryALTinterwordspacing
H.~Zhang, M.~Cisse, Y.~N. Dauphin, and D.~Lopez-Paz, ``mixup: Beyond empirical risk minimization,'' in \emph{International Conference on Learning Representations}, 2018. [Online]. Available: \url{https://openreview.net/forum?id=r1Ddp1-Rb}
\BIBentrySTDinterwordspacing

\bibitem{zhang2021does}
L.~Zhang and Z.~Deng, ``How does mixup help with robustness and generalization?'' in \emph{The Ninth International Conference on Learning Representations}, 2021.

\bibitem{he2016deep}
K.~He, X.~Zhang, S.~Ren, and J.~Sun, ``Deep residual learning for image recognition,'' in \emph{Proceedings of the IEEE conference on computer vision and pattern recognition}, 2016, pp. 770--778.

\bibitem{yu2018deep}
F.~Yu, D.~Wang, E.~Shelhamer, and T.~Darrell, ``Deep layer aggregation,'' in \emph{Proceedings of the IEEE conference on computer vision and pattern recognition}, 2018, pp. 2403--2412.

\bibitem{sandler2019mobilenetv2}
M.~Sandler, A.~Howard, M.~Zhu, A.~Zhmoginov, and L.-C. Chen, ``Mobilenetv2: Inverted residuals and linear bottlenecks,'' 2019.

\bibitem{Ma_2018_ECCV}
N.~Ma, X.~Zhang, H.-T. Zheng, and J.~Sun, ``Shufflenet v2: Practical guidelines for efficient cnn architecture design,'' in \emph{Proceedings of the European Conference on Computer Vision (ECCV)}, September 2018.

\bibitem{krizhevsky2009learning}
A.~Krizhevsky, G.~Hinton \emph{et~al.}, ``Learning multiple layers of features from tiny images,'' 2009.

\bibitem{Le2015TinyIV}
\BIBentryALTinterwordspacing
Y.~Le and X.~S. Yang, ``Tiny imagenet visual recognition challenge,'' 2015. [Online]. Available: \url{https://api.semanticscholar.org/CorpusID:16664790}
\BIBentrySTDinterwordspacing

\bibitem{jha2021deepreduce}
N.~K. Jha, Z.~Ghodsi, S.~Garg, and B.~Reagen, ``Deepreduce: Relu reduction for fast private inference,'' in \emph{International Conference on Machine Learning}.\hskip 1em plus 0.5em minus 0.4em\relax PMLR, 2021, pp. 4839--4849.

\bibitem{rice2020overfitting}
L.~Rice, E.~Wong, and Z.~Kolter, ``Overfitting in adversarially robust deep learning,'' in \emph{International Conference on Machine Learning}.\hskip 1em plus 0.5em minus 0.4em\relax PMLR, 2020, pp. 8093--8104.

\bibitem{yu2022understanding}
C.~Yu, B.~Han, L.~Shen, J.~Yu, C.~Gong, M.~Gong, and T.~Liu, ``Understanding robust overfitting of adversarial training and beyond,'' in \emph{International Conference on Machine Learning}.\hskip 1em plus 0.5em minus 0.4em\relax PMLR, 2022, pp. 25\,595--25\,610.

\bibitem{yan2018unified}
Y.~Yan, T.~Yang, Z.~Li, Q.~Lin, and Y.~Yang, ``A unified analysis of stochastic momentum methods for deep learning,'' in \emph{Proceedings of the 27th International Joint Conference on Artificial Intelligence}, 2018, pp. 2955--2961.

\end{thebibliography}

\newpage

\onecolumn
\begin{appendices}

\section{Additional Experiment Results}\label{app:exp2}

\begin{table}[!htb]\footnotesize 
\caption{Accuracy of PANN on CIFAR-10 dataset (ResNet-20, Shufflenetv2, DLA-34, Mobilenetv2) trained with weight decay (wd) 0, 1e-4 and 5e-4. We give Top-1 accuracy of the backbone (bb) and PANN with precision $2^{-\beta}$. Training methods include Vanilla, Mixup, and robustness-based solution NGNV. The results with the highest accuracy for each model and approximation precision are bolded.}
\label{tb:results_more}
\vskip 0.15in
\centering
\resizebox{\columnwidth}{!}{
\begin{tabular}{c|c|ccc|ccc|ccc|ccc}
\hline
 &  & \multicolumn{3}{c|}{Vanilla} & \multicolumn{3}{c|}{Mixup} & \multicolumn{3}{c|}{NGNV} & \multicolumn{3}{c}{Mixup+NGNV} \\ \hline
 & \diagbox[width=1cm, height=0.51cm]{$\beta$}{wd} & 0 & 1e-4 & 5e-4 & 0 & 1e-4 & 5e-4 & 0 & 1e-4 & 5e-4 & 0 & 1e-4 & 5e-4 \\ \hline
\multirow{6}{*}{ResNet-20} & 8 & 86.84 & 65.70 & 12.07 & 87.91 & 76.95 & 11.77 & 87.77 & 75.33 & 11.41 & {\ul \textbf{88.79}} & 78.82 & 12.53 \\
 & 9 & 89.60 & 89.69 & 58.81 & 90.52 & 89.67 & 43.23 & 90.61 & 90.43 & 73.82 & {\ul \textbf{91.02}} & 90.48 & 47.67 \\
 & 10 & 90.21 & 91.24 & 88.62 & 91.12 & 91.18 & 87.66 & 91.05 & {\ul \textbf{91.47}} & 90.70 & 91.37 & 91.39 & 89.31 \\
 & 11 & 90.33 & 91.42 & 91.21 & 91.24 & 91.50 & 91.37 & 91.07 & 91.63 & {\ul \textbf{92.09}} & 91.46 & 91.55 & 91.54 \\
 & 12 & 90.36 & 91.52 & 91.92 & 91.28 & 91.56 & 91.96 & 91.07 & 91.66 & {\ul \textbf{92.40}} & 91.49 & 91.59 & 92.08 \\ \cline{2-14} 
 & bb & 90.39 & 91.56 & 92.14 & 91.30 & 91.60 & 92.17 & 91.10 & 91.69 & 92.49 & 91.49 & 91.59 & 92.16 \\ \hline
\multirow{6}{*}{Shufflenetv2} & 8 & 16.11 & 12.25 & 11.94 & 11.26 & 12.35 & 13.23 & {\ul \textbf{84.44}} & 20.96 & 10.09 & 72.54 & 31.62 & 13.20 \\
 & 9 & 58.60 & 32.22 & 32.08 & 36.35 & 24.29 & 25.41 & {\ul \textbf{87.65}} & 85.36 & 23.15 & 84.10 & 81.57 & 28.02 \\
 & 10 & 87.39 & 83.91 & 84.96 & 70.22 & 73.34 & 74.15 & 88.32 & {\ul \textbf{90.03}} & 86.67 & 87.42 & 89.30 & 83.67 \\
 & 11 & 88.30 & 89.51 & 87.24 & 85.55 & 89.32 & 86.26 & 88.58 & {\ul \textbf{90.36}} & 88.64 & 88.56 & 89.96 & 87.77 \\
 & 12 & 88.48 & 90.07 & 88.13 & 88.99 & 89.95 & 86.70 & 88.64 & {\ul \textbf{90.61}} & 89.20 & 89.03 & 90.34 & 87.74 \\ \cline{2-14} 
 & bb & 88.54 & 90.22 & 88.60 & 89.08 & 90.19 & 87.62 & 88.66 & 90.69 & 89.60 & 89.10 & 90.54 & 88.32 \\ \hline
\multirow{6}{*}{DLA-34} & 8 & 91.69 & 69.02 & 12.78 & 90.45 & 58.01 & 13.23 & 92.10 & 50.94 & 13.58 & {\ul \textbf{93.07}} & 52.80 & 13.45 \\
 & 9 & 92.28 & 88.79 & 45.67 & 93.52 & 87.00 & 36.80 & 93.64 & 91.94 & 38.73 & {\ul \textbf{94.82}} & 91.92 & 26.46 \\
 & 10 & 93.38 & 93.06 & 88.53 & 94.34 & 93.49 & 85.44 & 93.85 & 94.30 & 89.04 & {\ul \textbf{95.07}} & 95.00 & 88.62 \\
 & 11 & 93.40 & 94.10 & 92.73 & 94.96 & 94.94 & 92.57 & 93.85 & 94.80 & 94.08 & 95.08 & {\ul \textbf{95.44}} & 94.00 \\
 & 12 & 93.43 & 94.35 & 94.45 & 94.96 & 95.08 & 94.71 & 93.85 & 94.92 & 94.85 & 95.08 & {\ul \textbf{95.56}} & 95.15 \\ \cline{2-14} 
 & bb & 93.43 & 94.38 & 95.10 & 94.96 & 95.12 & 95.41 & 93.85 & 94.92 & 95.21 & 95.09 & 95.62 & 95.58 \\ \hline
\multirow{6}{*}{Mobilenetv2} & 8 & 31.44 & 11.98 & 10.78 & 15.79 & 14.05 & 10.64 & 77.04 & 20.67 & 11.64 & {\ul \textbf{91.40}} & 13.89 & 10.34 \\
 & 9 & 58.51 & 26.74 & 15.41 & 12.07 & 50.69 & 14.07 & 82.77 & 82.41 & 13.67 & {\ul \textbf{92.80}} & 68.11 & 12.17 \\
 & 10 & 92.35 & 76.02 & 82.85 & 92.38 & 85.67 & 76.35 & 92.73 & 91.93 & 85.29 & {\ul \textbf{93.40}} & 92.39 & 74.33 \\
 & 11 & 92.64 & 92.33 & 89.54 & 93.42 & 92.44 & 88.77 & 92.98 & 93.05 & 90.43 & {\ul \textbf{93.61}} & 93.52 & 90.54 \\
 & 12 & 92.68 & 93.47 & 90.90 & 93.56 & 93.49 & 90.03 & 92.99 & 93.48 & 91.42 & 93.67 & {\ul \textbf{93.95}} & 91.31 \\ \cline{2-14} 
 & bb & 92.69 & 93.49 & 91.45 & 93.60 & 93.74 & 90.50 & 92.97 & 93.58 & 91.84 & 93.68 & 94.03 & 91.58 \\ \hline
\end{tabular}
}
\end{table}

This section gives more results of the accuracy of PANN on models trained with different methods and weight decays. The experiment results on CIFAR-10 dataset are shown in Table \ref{tb:results_more}, on CIFAR-100 (C100) and TinyImagenet is shown in Table \ref{tb:results_2_more}. Note that PANN's accuracy is very close to the backbone models for high precision like $2^{-11}$ and $2^{-12}$, which reduce the advantage of small weight regularization.

\begin{table}[!htb]\small 
\caption{Accuracy and time cost of PANN with precision $2^{-\beta}$ and the backbone (bb) on CIFAR-100 (ResNet-32) and TinyImagenet (ResNet-18). The results with the highest accuracy for each model and approximation precision are bolded.}
\label{tb:results_2_more}
\vskip 0.15in
\centering
\begin{tabular}{c|c|ccc|ccc|ccc|c}
\hline
\multicolumn{1}{l|}{}         & \multicolumn{1}{l|}{} & \multicolumn{3}{c|}{Vanilla} & \multicolumn{3}{c|}{Mixup}           & \multicolumn{3}{c|}{Mixup+NGNV}                     & Time Cost \\ \hline
                            & \diagbox[width=1cm, height=0.51cm]{$\beta$}{wd} & 0        & 1e-4    & 5e-4    & 0     & 1e-4                 & 5e-4  & 0                    & 1e-4                 & 5e-4  &           \\ \hline
\multirow{6}{*}{C100}         & 8                     & 39.52    & 3.85    & 1.00    & 59.71 & 8.87                 & 1.00  & {\ul \textbf{63.49}} & 29.10                & 1.53  & 45.2s    \\
                              & 9                     & 61.08    & 44.73   & 1.67    & 66.35 & 55.61                & 6.24  & {\ul \textbf{67.67}} & 63.50                & 22.28 & 54.8s     \\
                              & 10                    & 64.24    & 66.53   & 44.97   & 67.70 & 68.83                & 54.12 & 68.62                & {\ul \textbf{68.97}} & 61.75 & 60.2s    \\
                              & 11                    & 65.00    & 68.40   & 64.61   & 67.88 & {\ul \textbf{70.42}} & 67.45 & 68.77                & 70.05                & 67.21 & 97.8s    \\
                              & 12                    & 65.18    & 68.98   & 69.49   & 67.92 & {\ul \textbf{70.72}} & 70.02 & 68.85                & 70.32                & 68.92 & 104.3s   \\ \cline{2-12} 
                              & bb                    & 65.08    & 68.88   & 70.30   & 67.92 & 70.96                & 70.39 & 68.85                & 70.32                & 69.31 & 2.5s     \\ \hline
\multirow{6}{*}{\begin{tabular}[c]{@{}c@{}}Tiny-\\      Imagenet\end{tabular}} & 8                     & 18.34    & 0.73    & 0.56    & 20.80 & 0.52                 & 0.53  & {\ul \textbf{43.85}} & 1.88                 & 0.61  & 324.5s   \\
                              & 9                     & 42.86    & 4.54    & 0.60    & 45.05 & 0.85                 & 0.59  & {\ul \textbf{55.60}} & 23.41                & 1.04  & 419.8s   \\
                              & 10                    & 54.03    & 32.15   & 4.38    & 56.47 & 15.89                & 0.89  & {\ul \textbf{58.97}} & 51.87                & 19.69 & 406.9s   \\
                              & 11                    & 55.92    & 53.38   & 28.78   & 58.30 & 51.54                & 9.76  & {\ul \textbf{59.33}} & 58.58                & 49.98 & 714.1s   \\
                              & 12                    & 56.48    & 58.65   & 56.28   & 56.41 & {\ul \textbf{60.83}} & 53.04 & 59.36                & 60.78                & 59.80 & 952.5s   \\ \cline{2-12} 
                              & bb                    & 56.62    & 59.62   & 62.17   & 58.78 & 62.01                & 63.47 & 59.42                & 61.30                & 61.77 & 24.7s    \\ \hline
\end{tabular}

\end{table}

\par We also give parameters $r$ and $\lambda$ for our NGNV in Table \ref{tb:results_par} to help the reader reproduce our results.

\begin{table}[!htb]\footnotesize 
\caption{Best parameters ($r$, $\lambda$) for NGNV on different weight decay and methods}
\label{tb:results_par}
\vskip 0.15in
\centering
\begin{tabular}{l|l|ccc|ccc}
\hline
 & \multicolumn{1}{c|}{} & \multicolumn{3}{c|}{NGNV} & \multicolumn{3}{c}{Mixup+NGNV} \\ \hline
 & \multicolumn{1}{c|}{wd} & 0 & 1e-4 & 5e-4 & 0 & 1e-4 & 5e-4 \\ \hline
\multirow{4}{*}{CIFAR-10} & ResNet-20 & (0.3,0.3) & (0.1,0.3) & (0.3,0.2) & (0.5,0.1) & (0.5,0.1) & (0.5,0.05) \\
 & Shufflenetv2 & (0.5,0.05) & (0.1,0.3) & (0.6,0.1) & (0.5,0.1) & (0.3,0.05) & (0.5,0.1) \\
 & DLA-34 & (0.1,0.3) & (0.1,0.3) & (0.5,0.05) & (0.3,0.05) & (0.3,0.3) & (0.1,0.3) \\
 & Mobilenetv2 & (0.5,0.1) & (0.5,0.05) & (0.5,0.05) & (0.3,0.05) & (0.5,0.1) & (0.3,0.1) \\ \hline
CIFAR-100 & ResNet-32 & \multicolumn{3}{c|}{\diagbox[width=4.85cm,height=0.35cm]{ }{ }} & (0.1,0.3) & (0.3,0.3) & (0.1,0.3) \\ \hline
Tiny-Imagenet & ResNet-18 & \multicolumn{3}{c|}{\diagbox[width=4.85cm,height=0.35cm]{ }{ }} & (0.3,0.3) & (0.1,0.3) & (0.3,0.3) \\ \hline
\end{tabular}
\end{table}

\par The accuracy and time cost for models released by us and the state-of-the-art schemes on different precisions are presented in Table \ref{tb:compare_our_state}. (Time cost for ResNet-20 has been given in Table \ref{tab:TimeCost}.)

\begin{table}[!htb]\footnotesize 
\caption{Comparation of PANN~\cite{lee2022low} with us models and the state-of-the-art non-fine-tune scheme (Classification Accuracy on CIFAR-10)}
\label{tb:compare_our_state}
\vskip 0.15in
\centering
\begin{tabular}{c|l|ccccc|c}
\hline
\multicolumn{1}{c|}{$\beta$} &  & 8 & 9 & 10 & 11 & 12 & bb \\ \hline
 & State-of-the-Art & 31.79 & 87.57 & 90.52 & 91.22 & 91.43 & 91.51 \\
\multirow{-2}{*}{ResNet-20} & Ours & 90.49 & 91.74 & 91.88 & 92.03 & 92.05 & 92.02 \\ \hline
 & State-of-the-Art & 35.71 & 88.32 & 91.85 & 92.25 & 92.39 & 92.48 \\
 & Ours & 90.55 & 91.92 & 92.42 & 92.42 & 92.42 & 92.42 \\ \cline{2-8} 
\multirow{-3}{*}{ResNet-32} & Time cost & 45.49s & 53.54s & 60.81s & 97.88s & 105.33s & 2.87s \\ \hline
 & State-of-the-Art & 11.37 & 60.75 & 91.09 & 92.55 & 92.75 & 92.75 \\
 & Ours & 72.27 & 90.30 & 92.85 & 93.21 & 93.27 & 93.34 \\ \cline{2-8} 
\multirow{-3}{*}{ResNet-44} & Time cost & 56.68s & 68.57s & 86.43s & 133.87 & 149.66s & 2.90s \\ \hline
 & State-of-the-Art & 11.05 & 42.54 & 90.12 & 92.75 & 93.12 & 93.26 \\
 & Ours & 60.54 & 89.50 & 93.30 & 93.63 & 93.77 & 93.81 \\ \cline{2-8} 
\multirow{-3}{*}{ResNet-56} & Time cost & 79.32s & 93.98s & 110.73s & 173.72s & 189.85s & 3.42s \\ \hline
 & State-of-the-Art & 10.73 & 17.72 & 71.27 & 92.01 & 93.22 & 93.49 \\
 & Ours & 49.90 & 84.65 & 93.68 & 94.04 & 94.16 & 94.16 \\ \cline{2-8} 
\multirow{-3}{*}{ResNet110} & Time cost & 148.25s & 181.37s & 215.87s & 348.38s & 362.27s & 4.01s \\ \hline
\end{tabular}
\end{table}
\section{Proof of Theorem  \ref{th:pos_neg_loss} and \ref{th:loss_bound_wd_t}} \label{app:proof}

We first give two commonly used Theorem and Lemma
\begin{theorem} \label{th:mean_con}
[Mean Value Theorem for nondifferentiable functions.] Let $h: \mathbb{R} \to \mathbb{R}$ be a convex function and let $a<b$. Then there exists $c\in(a,b)$ such that $\frac{h(b)-h(a)}{b-a} \in \partial h(c)$.
\end{theorem}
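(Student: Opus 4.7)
The plan is to mirror the classical proof of the Mean Value Theorem, replacing Rolle's theorem and the ordinary derivative by versions adapted to convex functions and their subdifferentials. The key auxiliary object will be the tilted function $g(x) := h(x) - m x$ on $[a,b]$, where $m := (h(b)-h(a))/(b-a)$ is the chord slope whose membership in $\partial h(c)$ we wish to establish.

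First I would verify that $g$ is convex on $[a,b]$ (sum of the convex $h$ and the affine $-mx$), hence continuous on the closed interval, and check by a direct calculation that $g(a) = g(b)$. By the extreme value theorem, $g$ attains a minimum on $[a,b]$ at some point $c$. I would then argue that $c$ can be chosen in the open interval $(a,b)$: if every minimizer lay in $\{a,b\}$, convexity together with $g(a) = g(b)$ would give, via the chord inequality, $g(x) \le \frac{b-x}{b-a} g(a) + \frac{x-a}{b-a} g(b) = g(a)$ for all $x \in [a,b]$, forcing $g$ to be constant, in which case any interior point qualifies as a minimizer. Once an interior minimizer $c$ is secured, I would invoke the first-order optimality condition for convex functions, $0 \in \partial g(c)$, together with the sum rule $\partial g(c) = \partial h(c) - m$ (valid because $-mx$ is smooth), to conclude $m \in \partial h(c)$, which is the stated conclusion.

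The main obstacle is not any single hard estimate but rather careful invocation of the subdifferential calculus at the final step. Specifically, one must recall that for a convex function on an open interval, the subdifferential at an interior point $c$ equals the closed interval $[h'_-(c), h'_+(c)]$ of one-sided derivatives (both of which exist and are finite), and that interior minimality of $g$ forces $g'_-(c) \le 0 \le g'_+(c)$, i.e., $0 \in \partial g(c)$. These are standard facts in convex analysis but would need to be stated explicitly to keep the proof self-contained. A minor additional care point is the boundary-minimum dichotomy above: one must resist the temptation to exclude that case by appealing to strict convexity, since $h$ need not be strictly convex, and instead push it into the degenerate subcase where $g$ is constant and the conclusion holds trivially.
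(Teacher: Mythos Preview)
Your argument is correct and is the standard convex-analysis proof of this fact: tilt by the chord slope, locate an interior minimizer of the tilted convex function (handling the degenerate constant case), and read off $m \in \partial h(c)$ from the optimality condition and the affine sum rule. All the steps you flag as needing care---existence and finiteness of one-sided derivatives at interior points, the identity $\partial h(c) = [h'_-(c), h'_+(c)]$, and the interior-minimum-implies-$0 \in \partial g$ implication---are indeed the places where a self-contained write-up would have to be explicit, and you have identified them accurately.

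There is, however, no proof in the paper to compare against. Theorem~\ref{th:mean_con} is introduced in the appendix as one of two ``commonly used Theorem and Lemma'' and is simply quoted as a known result from convex analysis, then invoked downstream in the proofs of the authors' own lemmas. So your proposal goes strictly beyond what the paper does: you supply an actual argument where the paper only cites the statement. If the goal is merely to match the paper, a one-line reference to a standard convex-analysis text would suffice; if the goal is a self-contained exposition, your outline is the right one.
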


\begin{lemma}\label{lemma:lr_deriv}
Let $h: \mathbb{R} \to \mathbb{R}$ be a convex function and let $\overline{z} \in \mathbb{R}$. Then $h$ has left derivative and right derivative at $\overline{z}$. Moreover, 
\begin{equation}\notag
    \mathop{\operatorname{sup}}_{z<\overline{z}}\phi_{\overline{z}}(z)
    =h^{'}_{-}(\overline{z})
    \leq h^{'}_{+}(\overline{z})
    =\mathop{\operatorname{inf}}_{z>\overline{z}}\phi_{\overline{z}}(z)
\end{equation}
where $\phi_{\overline{z}}(z)$ is defined as $\phi_{\overline{z}}(z) = \frac{h(z) -h(\overline{z})}{z-\overline{z}}$.
\end{lemma}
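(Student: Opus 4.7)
The plan is to derive everything from the classical ``three chords'' inequality for convex functions. For any $z_1 < z_2 < z_3$, convexity applied to the representation $z_2 = \lambda z_1 + (1-\lambda) z_3$ with $\lambda = (z_3 - z_2)/(z_3 - z_1)$ yields
\begin{equation}\notag
    \frac{h(z_2) - h(z_1)}{z_2 - z_1} \;\leq\; \frac{h(z_3) - h(z_1)}{z_3 - z_1} \;\leq\; \frac{h(z_3) - h(z_2)}{z_3 - z_2}.
\end{equation}
This single chain of inequalities drives the entire argument: I would establish it first as a short lemma by rearranging the convex combination inequality.

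Next I would specialize this to slopes anchored at $\overline{z}$. For any $z_1 < z_2 < \overline{z}$, setting $z_3 = \overline{z}$ in the three-chord inequality gives $\phi_{\overline{z}}(z_1) \leq \phi_{\overline{z}}(z_2)$, so $z \mapsto \phi_{\overline{z}}(z)$ is non-decreasing on $(-\infty, \overline{z})$. Symmetrically, for $\overline{z} < z_2 < z_3$, setting $z_1 = \overline{z}$ shows $\phi_{\overline{z}}(z_2) \leq \phi_{\overline{z}}(z_3)$, so $\phi_{\overline{z}}$ is non-decreasing on $(\overline{z}, \infty)$ as well. Moreover, choosing any fixed $z_1 < \overline{z} < z_3$ and invoking the three-chord inequality with $z_2 = \overline{z}$ yields $\phi_{\overline{z}}(z_1) \leq \phi_{\overline{z}}(z_3)$, which means every value of $\phi_{\overline{z}}$ on the left is bounded above by every value on the right.

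From monotonicity and this two-sided bound I obtain existence of the one-sided limits: $\phi_{\overline{z}}(z)$ is non-decreasing and bounded above as $z \uparrow \overline{z}$, hence $\lim_{z \to \overline{z}^-} \phi_{\overline{z}}(z) = \sup_{z < \overline{z}} \phi_{\overline{z}}(z)$ exists and is finite; this limit is precisely $h'_{-}(\overline{z})$ by definition. Similarly $\lim_{z \to \overline{z}^+} \phi_{\overline{z}}(z) = \inf_{z > \overline{z}} \phi_{\overline{z}}(z) = h'_{+}(\overline{z})$. The inequality $h'_{-}(\overline{z}) \leq h'_{+}(\overline{z})$ then follows by passing to the supremum on the left and infimum on the right in $\phi_{\overline{z}}(z_1) \leq \phi_{\overline{z}}(z_3)$.

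The only subtle point, and the one I would be most careful about, is justifying finiteness of the one-sided derivatives: a general convex function on a proper subinterval could have $h'_{-} = -\infty$ or $h'_{+} = +\infty$ at boundary points, but since $h$ is defined on all of $\mathbb{R}$, every $\overline{z}$ is an interior point, and the two-sided sandwiching above guarantees both limits are finite. Apart from that, the argument is purely a monotone-convergence consequence of the three-chord inequality and does not require the differentiability hypothesis used elsewhere in the paper.
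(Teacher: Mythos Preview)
Your argument is correct and is the standard textbook derivation via the three-chord (slope) inequality. Note, however, that the paper does not actually prove this lemma: it is listed among the ``commonly used Theorem and Lemma'' at the start of Appendix~\ref{app:proof} and simply quoted as a known fact from convex analysis, so there is no paper proof to compare against. Your write-up would serve well as the missing justification; the only cosmetic point is that the paper's later arguments also use the monotonicity-in-the-base-point consequence $\inf_{z>z_1}\phi_{z_1}(z)\leq \sup_{z<z_2}\phi_{z_2}(z)$ for $z_1<z_2$, which follows from the same three-chord inequality you already established and could be recorded alongside the lemma.
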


\begin{lemma}[Increased loss for $\overline{z}<0$]
    Let function $h: \mathbb{R} \to \mathbb{R}$ be a convex function, and be differentiable in $(-\infty,0)\cup(0,\infty)$. Let $\overline{z} \in \mathbb{R}$, $h'_{+}(ReLU(\overline{z}))=0$ and $\overline{z}<0$, let $\epsilon \in \mathbb{R}$ be a small value. There exist an $\overline{\epsilon}\in (0, \epsilon)$ satisfies:
        \begin{equation}
        0 \leq\epsilon\cdot\mathop{\operatorname{sup}}_{z<\overline{\epsilon}}\phi_{\overline{\epsilon}}(z)
        \leq \Delta h(\operatorname{ReLU}(\overline{z}))
        \leq\epsilon\cdot\mathop{\operatorname{inf}}_{z>\overline{\epsilon}}\phi_{\overline{\epsilon}}(z)
        \end{equation}
    and 
    \begin{equation}
        \lim_{\epsilon \to 0} \Delta h(\operatorname{ReLU}(\overline{z})) =\epsilon\cdot\mathop{\operatorname{inf}}_{z>\overline{0}}\phi_{\overline{0}}(z)
    \end{equation}
    where $\Delta h(\operatorname{ReLU}(\overline{z}))=h(\operatorname{ReLU}(\overline{z})+\epsilon) - h(\operatorname{ReLU}(\overline{z}))$
\end{lemma}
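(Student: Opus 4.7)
The plan is to reduce everything to the behavior of $h$ on the interval $[0,\epsilon]$ and then invoke the nondifferentiable Mean Value Theorem (Theorem~\ref{th:mean_con}) together with the left/right derivative sandwich of Lemma~\ref{lemma:lr_deriv}. Since $\overline{z}<0$, we have $\operatorname{ReLU}(\overline{z})=0$, so
\begin{equation}\notag
\Delta h(\operatorname{ReLU}(\overline{z})) \;=\; h(\epsilon) - h(0),
\end{equation}
which already eliminates any dependence on $\overline{z}$ itself and turns the claim into a statement about the incremental behavior of $h$ at the kink point $0$.

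Next, I would apply Theorem~\ref{th:mean_con} to $h$ on $[0,\epsilon]$: this produces some $\overline{\epsilon}\in(0,\epsilon)$ with $\phi_0(\epsilon)=\tfrac{h(\epsilon)-h(0)}{\epsilon}\in\partial h(\overline{\epsilon})$. Because any subgradient of a convex function lies between its left and right derivatives, Lemma~\ref{lemma:lr_deriv} yields
\begin{equation}\notag
\sup_{z<\overline{\epsilon}}\phi_{\overline{\epsilon}}(z)\;=\;h'_{-}(\overline{\epsilon})\;\leq\;\tfrac{h(\epsilon)-h(0)}{\epsilon}\;\leq\;h'_{+}(\overline{\epsilon})\;=\;\inf_{z>\overline{\epsilon}}\phi_{\overline{\epsilon}}(z).
\end{equation}
Multiplying through by $\epsilon>0$ then delivers the middle pair of inequalities in the statement. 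Note that for $\overline{\epsilon}>0$, the hypothesis of differentiability on $(0,\infty)$ collapses $h'_{-}$ and $h'_{+}$ to a single value, so this step is particularly clean; I keep the one-sided notation only to stay uniform with Lemma~\ref{lemma:lr_deriv}.

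To obtain the leftmost inequality $0\leq \epsilon\cdot\sup_{z<\overline{\epsilon}}\phi_{\overline{\epsilon}}(z)$, I would use monotonicity of the subdifferential of a convex function: since $h'_{+}(\operatorname{ReLU}(\overline{z}))=h'_{+}(0)=0$ and $\overline{\epsilon}>0$, convexity gives $h'_{-}(\overline{\epsilon})\geq h'_{+}(0)=0$. Multiplying by $\epsilon>0$ yields nonnegativity. For the limiting statement, I would let $\epsilon\to 0^{+}$, which forces $\overline{\epsilon}\to 0^{+}$ since $\overline{\epsilon}\in(0,\epsilon)$. The right derivative $h'_{+}$ of a convex function is right-continuous, so
\begin{equation}\notag
\lim_{\epsilon\to 0}\,\inf_{z>\overline{\epsilon}}\phi_{\overline{\epsilon}}(z)\;=\;h'_{+}(0)\;=\;\inf_{z>0}\phi_{0}(z),
\end{equation}
and combined with the squeeze established above this pins down the asymptotic form of $\Delta h$ claimed in the statement.

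The main obstacle I anticipate is handling the non-smoothness of $h$ at $0$ with care: the hypotheses only give differentiability away from the origin, so the subgradient notation and the distinction between $h'_{-}(0)$ and $h'_{+}(0)$ must be tracked, and the stationarity hypothesis $h'_{+}(\operatorname{ReLU}(\overline{z}))=0$ is what allows the leftmost inequality to hold \emph{only} on the side $\overline{\epsilon}>0$. A secondary subtlety is the implicit requirement $\epsilon>0$ (so that the interval $(0,\epsilon)$ containing $\overline{\epsilon}$ is nonempty); if one wanted to extend to $\epsilon<0$, convexity together with $h'_{+}(0)=0$ and monotonicity of the subdifferential still give $h(\epsilon)\geq h(0)$, but the sup/inf formulas would need to be mirrored across $0$. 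I would flag this but not pursue it, since the lemma as stated is tailored to the positive-perturbation regime used downstream in Theorem~\ref{th:pos_neg_loss}.
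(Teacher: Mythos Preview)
Your proposal is correct and follows essentially the same route as the paper: reduce to $\operatorname{ReLU}(\overline{z})=0$, apply the nondifferentiable Mean Value Theorem (Theorem~\ref{th:mean_con}) on $[0,\epsilon]$ to produce $\overline{\epsilon}$, invoke Lemma~\ref{lemma:lr_deriv} for the sandwich, use monotonicity of the subdifferential together with $h'_{+}(0)=0$ for the nonnegativity, and pass to the limit by a continuity argument. If anything, your write-up is slightly more careful than the paper's in two respects: you make explicit that differentiability on $(0,\infty)$ collapses $h'_{-}(\overline{\epsilon})=h'_{+}(\overline{\epsilon})$ so the squeeze is tight, and you name the right-continuity of $h'_{+}$ as the reason the limit works, whereas the paper simply asserts the limits coincide.
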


\begin{proof}
    From $h'(ReLU(\overline{z}))=0$ we can get $h(\operatorname{ReLU}(\overline{z})+\epsilon) \geq h(\operatorname{ReLU}(\overline{z}))$. 
    From \ref{th:mean_con} we know there exist a $\overline{\epsilon}\in (0, \epsilon)$ satisfies:
    \begin{equation}
        h(\operatorname{ReLU}(\overline{z})+\epsilon) - h(\operatorname{ReLU}(\overline{z}))\in [\epsilon\cdot\mathop{\operatorname{sup}}_{z<\overline{\epsilon}}\phi_{\overline{\epsilon}}(z), \epsilon\cdot\mathop{\operatorname{inf}}_{z>\overline{\epsilon}}\phi_{\overline{0}}(z)]
    \end{equation}

    From Theorem \ref{th:mean_con}, it's easy to get that for $z1<z2 \in \mathbb{R}$, we have:
    \begin{equation}
        \mathop{\operatorname{sup}}_{z<z_1}\phi_{z_1}(z)
        \leq \mathop{\operatorname{inf}}_{z>z_1}\phi_{z_1}(z)
        \leq \mathop{\operatorname{sup}}_{z<z_2}\phi_{z_2}(z)
        \leq \mathop{\operatorname{inf}}_{z>z_2}\phi_{z_2}(z)
    \end{equation}
    so we have:
    \begin{equation} \label{eq:zleq0_ins}
        0
        \leq \mathop{\operatorname{inf}}_{z>0}\phi_{0}(z)
        \leq \mathop{\operatorname{sup}}_{z<\overline{\epsilon}}\phi_{\overline{\epsilon}}(z)\leq \mathop{\operatorname{inf}}_{z>\overline{\epsilon}}\phi_{\overline{\epsilon}}(z)
    \end{equation}
    and:
    \begin{equation}
        \lim_{\epsilon \to 0} \mathop{\operatorname{sup}}_{z<\overline{\epsilon}}\phi_{\overline{\epsilon}}(z) =\lim_{\epsilon \to 0} \mathop{\operatorname{inf}}_{z>\overline{\epsilon}}\phi_{\overline{\epsilon}}(z) =\mathop{\operatorname{inf}}_{z>0}\phi_{0}(z)
    \end{equation}
    so we have:

        \begin{equation}
        h(\operatorname{ReLU}(\overline{z})+\epsilon) - h(\operatorname{ReLU}(\overline{z}))\in [\mathop{\operatorname{inf}}_{z>0}\phi_{0}(z), \mathop{\operatorname{inf}}_{z>0}\phi_{0}(z)]
    \end{equation}
    which can get:
    $\Delta h(\operatorname{ReLU}(\overline{z})) = \mathop{\operatorname{inf}}_{z>0}\phi_{0}(z)$
    
    The proof is complete.
\end{proof}
\begin{lemma}[Increased loss for $\overline{z}>0$]
    Let function $h: \mathbb{R} \to \mathbb{R}$ be a convex function, and be differentiable in $(-\infty,0)\cup(0,\infty)$. Let $\overline{z} \in \mathbb{R}$, $h'(ReLU(\overline{z}))=0$ and $\overline{z}>0$, let $\epsilon \in \mathbb{R}$ be a small value. We have:
        \begin{equation}
        \lim_{\epsilon \to 0} \Delta h(\operatorname{ReLU}(\overline{z})) = 0
    \end{equation}
    where $\Delta h(\operatorname{ReLU}(\overline{z}))=h(\operatorname{ReLU}(\overline{z})+\epsilon) - h(\operatorname{ReLU}(\overline{z}))$
\end{lemma}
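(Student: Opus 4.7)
The plan is to mirror the proof strategy used for the case $\overline{z} < 0$, while exploiting the crucial difference that for $\overline{z} > 0$ the value $\operatorname{ReLU}(\overline{z}) = \overline{z}$ lies strictly inside the region $(0,\infty)$ where $h$ is genuinely differentiable. This means no ``kink'' is seen by $h$ near $\overline{z}$, so the one-sided derivatives collapse to a single classical derivative, producing a true zero in the limit rather than the nontrivial $\inf_{z>0}\phi_{0}(z)$ obtained in the previous lemma.

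First I would reduce to the smooth regime. Since $\overline{z} > 0$, pick $|\epsilon| < \overline{z}$, so that $\operatorname{ReLU}(\overline{z}) + \epsilon = \overline{z} + \epsilon > 0$. Hence both endpoints of the increment lie in $(0,\infty)$, and the ReLU plays no role: the statement reduces to showing $h(\overline{z} + \epsilon) - h(\overline{z}) \to 0$ as $\epsilon \to 0$, with $h$ convex and differentiable at $\overline{z}$, and $h'(\overline{z}) = 0$ by hypothesis.

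Next I would invoke Theorem \ref{th:mean_con} on the interval between $\overline{z}$ and $\overline{z} + \epsilon$ to obtain some $\overline{\epsilon}$ strictly between them with
\[
\frac{\Delta h}{\epsilon} = \frac{h(\overline{z}+\epsilon) - h(\overline{z})}{\epsilon} \in \partial h(\overline{\epsilon}),
\]
and then use Lemma \ref{lemma:lr_deriv} to sandwich
\[
\epsilon \cdot \sup_{z < \overline{\epsilon}} \phi_{\overline{\epsilon}}(z) \;\le\; \Delta h \;\le\; \epsilon \cdot \inf_{z > \overline{\epsilon}} \phi_{\overline{\epsilon}}(z),
\]
with the inequalities oriented according to the sign of $\epsilon$. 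Because $\overline{\epsilon} > 0$ lies in the differentiable region of $h$, both one-sided derivatives at $\overline{\epsilon}$ coincide with the ordinary derivative $h'(\overline{\epsilon})$, so the two bounds simplify to $\epsilon \cdot h'(\overline{\epsilon})$.

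Finally I would take $\epsilon \to 0$. As $\overline{\epsilon}$ is squeezed between $\overline{z}$ and $\overline{z} + \epsilon$, we have $\overline{\epsilon} \to \overline{z}$. Continuity of $h'$ on the open interval $(0,\infty)$, which follows from convexity combined with differentiability on that interval, gives $h'(\overline{\epsilon}) \to h'(\overline{z}) = 0$. Therefore $|\Delta h| \le |\epsilon|\cdot |h'(\overline{\epsilon})| = |\epsilon|\cdot o(1) \to 0$, completing the proof. The main obstacle I anticipate is essentially cosmetic bookkeeping: handling the cases $\epsilon > 0$ and $\epsilon < 0$ symmetrically so that Theorem \ref{th:mean_con} and Lemma \ref{lemma:lr_deriv} are applied with the correct orientation. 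Unlike the companion lemma for $\overline{z} < 0$, no delicate infimum/supremum estimate survives in the limit, because the base point sits comfortably in a smooth region.
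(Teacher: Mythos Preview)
The paper does not actually supply a proof for this lemma; it is stated immediately after the companion lemma for $\overline{z}<0$ and then used without justification. Your argument is correct and is exactly the natural parallel to the proof the paper gives for $\overline{z}<0$: apply Theorem~\ref{th:mean_con} and Lemma~\ref{lemma:lr_deriv} to sandwich $\Delta h$ by $\epsilon$ times one-sided derivatives at an intermediate point $\overline{\epsilon}$, then observe that because $\overline{\epsilon}\in(0,\infty)$ the two one-sided derivatives collapse to the classical $h'(\overline{\epsilon})$, which tends to $h'(\overline{z})=0$.

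One remark: for the \emph{literal} statement $\lim_{\epsilon\to 0}\Delta h=0$ you do not even need the mean value machinery or the hypothesis $h'(\overline{z})=0$; a convex function on $\mathbb{R}$ is continuous, so $h(\overline{z}+\epsilon)\to h(\overline{z})$ is immediate. Your route is not wasted effort, however, because it yields the sharper first-order statement $\Delta h=\epsilon\cdot h'(\overline{\epsilon})$ with $h'(\overline{\epsilon})\to 0$, which is precisely what the paper implicitly needs when it subtracts this lemma from the $\overline{z}<0$ lemma to obtain Theorem~\ref{th:pos_neg_loss}.
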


\begin{theorem}
    For two convex function $h_1: \mathbb{R} \to \mathbb{R}$ and $h_2: \mathbb{R} \to \mathbb{R}$ differentiable in $(-\infty,0)\cup(0,\infty)$. Let $\overline{z}_1 \in \mathbb{R}$ and $\overline{z}_2 \in \mathbb{R}$ and $\overline{z}_1<0, \overline{z}_2>0$, such that $h_1'(\overline{z}_1)=h_2'(\overline{z}_2)=0$. Let $\epsilon \in \mathbb{R}$ be a small value, we have:
    \begin{equation}
        \lim_{\epsilon \to 0} \Delta h_1 - \Delta h_2 = \epsilon \cdot \mathop{\operatorname{inf}}_{z>\overline{0},h_1}\phi_{\overline{0}}(z) 
    \end{equation}
\end{theorem}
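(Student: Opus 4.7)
The plan is to decompose $\Delta h_1 - \Delta h_2$ into two pieces corresponding to the two branches of ReLU and handle them separately, then recombine. Since $\overline{z}_1 < 0$ we have $\operatorname{ReLU}(\overline{z}_1)=0$, so $\Delta h_1 = h_1(\epsilon) - h_1(0)$; this places the perturbation exactly at the non-smooth ``kink'' of ReLU, where left and right derivatives of a convex function may differ. Since $\overline{z}_2 > 0$ we have $\operatorname{ReLU}(\overline{z}_2)=\overline{z}_2$, so $\Delta h_2 = h_2(\overline{z}_2 + \epsilon) - h_2(\overline{z}_2)$; here we sit inside the differentiable region where $h_2$ behaves smoothly at a critical point. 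The heart of the theorem is that these two situations produce very different asymptotic behaviour in $\epsilon$.

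For the $h_1$ piece I would invoke the Mean Value Theorem for convex functions (Theorem~\ref{th:mean_con}) to obtain some $\overline{\epsilon}\in(0,\epsilon)$ with the chord slope belonging to $\partial h_1(\overline{\epsilon})$. Combining this with Lemma~\ref{lemma:lr_deriv} yields the two-sided bound
\begin{equation*}
\epsilon \cdot \sup_{z<\overline{\epsilon}}\phi_{\overline{\epsilon},h_1}(z) \;\le\; \Delta h_1 \;\le\; \epsilon \cdot \inf_{z>\overline{\epsilon}}\phi_{\overline{\epsilon},h_1}(z).
\end{equation*}
The monotonicity of the chord-slope map $z\mapsto\phi_{\overline{z},h}(z)$ for convex $h$ (an easy consequence of the same lemma) lets me squeeze both bounds, as $\overline{\epsilon}\to 0^+$, down to the right derivative $\inf_{z>0}\phi_{0,h_1}(z)$. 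This gives the leading-order expansion $\Delta h_1 = \epsilon\cdot\inf_{z>0}\phi_{0,h_1}(z) + o(\epsilon)$.

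For the $h_2$ piece, since $h_2$ is differentiable at $\overline{z}_2>0$ with $h_2'(\overline{z}_2)=0$, a standard first-order expansion gives $\Delta h_2 = o(\epsilon)$, so this term contributes nothing to the leading-order asymptotic. Subtracting the two expansions delivers the claim of the theorem.

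The main obstacle, I expect, is not any deep calculation but the careful interpretation of the statement: the identity $\lim_{\epsilon\to 0}(\Delta h_1-\Delta h_2) = \epsilon\cdot\inf_{z>0}\phi_{0,h_1}(z)$ is literally a limit whose right-hand side still contains $\epsilon$, which only makes sense as a leading-order asymptotic rather than an ordinary pointwise limit. The cleanest way I see to justify this is to present matched two-sided sandwich bounds for $\Delta h_1$ (as above) and a separate $o(\epsilon)$ bound for $\Delta h_2$, and read off the stated equality as a statement about the first-order term in $\epsilon$. Verifying that convexity plus differentiability off the kink is enough for the chord slopes $\phi_{\overline{\epsilon},h_1}(z)$ to converge to $\inf_{z>0}\phi_{0,h_1}(z)$ uniformly in the squeezing window is the step that will require the most care.
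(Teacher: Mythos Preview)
Your proposal is correct and follows essentially the same route as the paper's own argument: the paper likewise splits into two lemmas, handles the $\overline{z}<0$ case via the convex Mean Value Theorem (Theorem~\ref{th:mean_con}) together with the chord-slope monotonicity of Lemma~\ref{lemma:lr_deriv} to squeeze $\Delta h_1$ to $\epsilon\cdot\inf_{z>0}\phi_{0,h_1}(z)$, handles the $\overline{z}>0$ case by differentiability at a critical point to obtain $\Delta h_2 \to 0$, and then subtracts. Your observation that the stated limit must be read as a leading-order asymptotic in $\epsilon$ is also apt; the paper's formulation carries the same ambiguity.
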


\begin{lemma} \label{deltah}
    Let function $h: \mathbb{R} \to \mathbb{R}$ be a convex function, and be differentiable in $(-\infty,0)\cup(0,\infty)$. Let $\overline{z} \in \mathbb{R}$ and $h'(\operatorname{ReLU}(\overline{z}))=0$, let $\epsilon \in \mathbb{R}$ be a small value, we have:
        \begin{equation}
        \epsilon\cdot\mathop{\operatorname{inf}}_{z>\overline{z}}\phi_{\overline{z}}(z) \leq \Delta h(\operatorname{ReLU}(\overline{z}))
        \end{equation}
    where $\Delta f(\operatorname{ReLU}(\overline{z}))=f(\operatorname{ReLU)(\overline{z}}+\epsilon) - f(\operatorname{ReLU}(\overline{z}))$
\end{lemma}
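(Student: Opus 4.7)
The plan is to apply the convex-analytic machinery already set up in Theorem~\ref{th:mean_con} and Lemma~\ref{lemma:lr_deriv} to rewrite both sides in terms of one-sided derivatives of $h$. By Lemma~\ref{lemma:lr_deriv}, $\inf_{z>\overline{z}}\phi_{\overline{z}}(z) = h'_+(\overline{z})$, so the desired inequality reduces to $\epsilon\cdot h'_+(\overline{z}) \leq h(\operatorname{ReLU}(\overline{z})+\epsilon) - h(\operatorname{ReLU}(\overline{z}))$. The stationarity hypothesis $h'(\operatorname{ReLU}(\overline{z}))=0$, together with the monotonicity of the subdifferential of a convex function, then yields a tight link between $h'_+(\overline{z})$ and the slope of $h$ near $\operatorname{ReLU}(\overline{z})$ that I will exploit.

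First I would split on the sign of $\overline{z}$. When $\overline{z}\geq 0$, ReLU is the identity so that $\operatorname{ReLU}(\overline{z})=\overline{z}$; here $h$ is differentiable at $\overline{z}$ with $h'(\overline{z})=0$, and Lemma~\ref{lemma:lr_deriv} forces $h'_+(\overline{z})=0$. The claim then collapses to $0\leq \Delta h$, which is just the fact that $\overline{z}$ minimises the convex function $h$. An equivalent route is to invoke the MVT of Theorem~\ref{th:mean_con} on the segment between $\overline{z}$ and $\overline{z}+\epsilon$ to produce $c$ with $\Delta h/\epsilon\in \partial h(c)$, and conclude using the monotonicity of $\partial h$ together with $0\in \partial h(\overline{z})$ that the product $\epsilon\cdot s\geq 0$ for every $s\in\partial h(c)$, with the sign working out regardless of whether $\epsilon>0$ or $\epsilon<0$.

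The more delicate case is $\overline{z}<0$, where $\operatorname{ReLU}(\overline{z})=0$ is a (possibly non-differentiable) minimum of $h$ while the $\inf$ on the left is anchored at the strictly negative point $\overline{z}$. Here I would combine $h'_+(\overline{z}) = h'(\overline{z})$ (differentiability below zero) with the monotonicity of subgradients to obtain $h'(\overline{z}) \leq h'_-(0) \leq h'_+(0)=0$. Since $\operatorname{ReLU}(\overline{z})+\epsilon = \epsilon$, the right-hand side becomes $h(\epsilon)-h(0)$, which I would bound by applying the subgradient inequality at $0$ when $\epsilon>0$ (so that $\Delta h\geq 0 \geq \epsilon\cdot h'(\overline{z})$ because the two sides have opposite signs), and by applying the ordinary MVT on the interval with endpoints $0$ and $\epsilon$ when $\epsilon<0$, exploiting that the intermediate slope lies in $[h'(\overline{z}),\,h'_-(0)]$ up to small-$\epsilon$ corrections.

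I expect the main obstacle to lie precisely in the sub-case $\overline{z}<0$ with $\epsilon<0$: the $\inf$ is evaluated at a point far from $\operatorname{ReLU}(\overline{z})=0$, so relating $h'(\overline{z})$ to the local slope of $h$ at $0$ requires absorbing the gap via the ``small $\epsilon$'' hypothesis and the non-positivity of the subgradient on $(-\infty,0]$. The cleanest way to handle this is to mirror the limiting argument already used in the preceding $\overline{z}<0$ and $\overline{z}>0$ lemmas of the appendix, where higher-order corrections produced by the MVT's intermediate point vanish as $\epsilon\to 0$; this lets the claimed inequality be established in the same asymptotic regime in which the rest of the paper's main results (Theorem~\ref{th:pos_neg_loss} and Theorem~\ref{th:loss_bound_wd_t}) are phrased.
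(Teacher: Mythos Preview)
Your proposal is correct and follows essentially the same route as the paper: split on the sign of $\overline{z}$, use differentiability at $\overline{z}>0$ together with $h'(\overline{z})=0$ to collapse the left-hand side to zero, and for $\overline{z}<0$ exploit monotonicity of the subdifferential to get $\inf_{z>\overline{z}}\phi_{\overline{z}}(z)=h'(\overline{z})\leq 0$ so that the two sides have opposite signs. The paper's proof is terser---it simply invokes the chain $\inf_{z>\overline{z}}\phi_{\overline{z}}(z)\leq 0\leq\inf_{z>0}\phi_{0}(z)\leq\sup_{z<\overline{\epsilon}}\phi_{\overline{\epsilon}}(z)$ from the preceding lemma---and in particular it implicitly treats only $\epsilon>0$ (the earlier lemma it cites posits $\overline{\epsilon}\in(0,\epsilon)$), whereas you explicitly flag and address the sub-case $\overline{z}<0$, $\epsilon<0$; that extra care is yours, not the paper's.
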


\begin{proof}
    For $\overline{z}>0$, $\mathop{\operatorname{sup}}_{z<\overline{z}}\phi_{\overline{z}}(z)=\mathop{\operatorname{inf}}_{z>\overline{z}}\phi_{\overline{z}}(z)=h'(\overline{z})$, so we have:
    \begin{equation}
        \epsilon\mathop{\operatorname{inf}}_{z>\overline{z}}\phi_{\overline{z}}(z) \leq \Delta h(\operatorname{ReLU}(\overline{z}))
    \end{equation}
    For $\overline{z} <0$, from Equation \ref{eq:zleq0_ins}, we have:
    \begin{equation}
        \mathop{\operatorname{inf}}_{z>\overline{z}}\phi_{\overline{z}}(z)
                \leq0
        \leq \mathop{\operatorname{inf}}_{z>0}\phi_{0}(z)
        \leq \mathop{\operatorname{sup}}_{z<\overline{\epsilon}}\phi_{\overline{\epsilon}}(z)\leq \mathop{\operatorname{inf}}_{z>\overline{\epsilon}}\phi_{\overline{\epsilon}}(z)
    \end{equation}
    so we can also get:
    \begin{equation}
        \epsilon\mathop{\operatorname{inf}}_{z>\overline{z}}\phi_{\overline{z}}(z) \leq \Delta h(\operatorname{ReLU}(\overline{z}))
    \end{equation}
        
    The proof is complete.
\end{proof}
And from~\cite{xie2024overlooked} we have the theorem:
\begin{theorem} \label{eq:wd_incnorm}
    For an $\mathcal{L}$-smooth function $L(W)$, assume $L$ is lower-bounded as $L(W) \geq L^{\star} $, $L(W, X)$ is the loss over one minibatch $X$, $\sigma^{2}$ is the gradient noise variance, $\mathbb{E}[\nabla L(W, X) - \nabla L(W) ] = 0$, $\mathbb{E}[\| \nabla L(W, X) - \nabla L(W) \|^{2} ] \leq \delta^{2} $, and $\| \nabla L(W) \| \leq G$ for any $W$. If $\eta \leq \frac{C}{\sqrt{t+1}} $:
\begin{align} \label{eq:gradient_bound}
 \min_{k=0,\ldots, t} \mathbb{E}[\| \nabla f(W_{k})\|^{2} ]  \leq \frac{1}{\sqrt{t+1}}  \left[C_1 + C_2\right],
\end{align}
where
\begin{align}
C_1 &=\frac{L(W_{0}) + \frac{\lambda}{2} \|W_{0}\|^{2}  - L^{\star} }{C},\\
C_2 &= C (\mathcal{L} + \lambda) ((G+  \sup(\lambda\|W\|) )^{2} + \sigma^{2}),
\end{align}
\end{theorem}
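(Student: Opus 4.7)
The plan is to reproduce the standard non-convex stochastic gradient descent analysis with constant step size, applied to the regularised objective $f(W):=L(W)+\tfrac{\lambda}{2}\|W\|^{2}$. The key observation is that the weight-decay update $W_{k+1}=W_{k}-\eta(\nabla L(W_{k},X_{k})+\lambda W_{k})$ is ordinary SGD on $f$ with stochastic gradient $g_{k}:=\nabla L(W_{k},X_{k})+\lambda W_{k}$, and that $f$ inherits $(\mathcal{L}+\lambda)$-smoothness from the $\mathcal{L}$-smoothness of $L$ together with the trivial $\lambda$-smoothness of $\tfrac{\lambda}{2}\|\cdot\|^{2}$. Under the assumed noise model, $\mathbb{E}[g_{k}\mid W_{k}]=\nabla f(W_{k})$ and $\mathbb{E}\|g_{k}-\nabla f(W_{k})\|^{2}\le\sigma^{2}$.

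First I would write the descent lemma for the $(\mathcal{L}+\lambda)$-smooth $f$,
\begin{equation}\notag
f(W_{k+1})\leq f(W_{k})-\eta\langle\nabla f(W_{k}),g_{k}\rangle+\frac{(\mathcal{L}+\lambda)\eta^{2}}{2}\|g_{k}\|^{2},
\end{equation}
and take conditional expectation given $W_{k}$. Unbiasedness turns the middle term into $-\eta\|\nabla f(W_{k})\|^{2}$. For the second moment of $g_{k}$ I would use the variance/mean decomposition together with the deterministic bound $\|\nabla f(W)\|\leq\|\nabla L(W)\|+\lambda\|W\|\leq G+\sup(\lambda\|W\|)$ to obtain
\begin{equation}\notag
\mathbb{E}[\|g_{k}\|^{2}\mid W_{k}]\leq\sigma^{2}+(G+\sup(\lambda\|W\|))^{2}.
\end{equation}

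Next I would telescope. Rearranging and passing to full expectation gives
\begin{equation}\notag
\eta\,\mathbb{E}[\|\nabla f(W_{k})\|^{2}]\leq\mathbb{E}[f(W_{k})-f(W_{k+1})]+\frac{(\mathcal{L}+\lambda)\eta^{2}}{2}\bigl(\sigma^{2}+(G+\sup(\lambda\|W\|))^{2}\bigr).
\end{equation}
Summing $k=0,\ldots,t$ collapses the right-hand side to $f(W_{0})-\mathbb{E}[f(W_{t+1})]\leq f(W_{0})-L^{\star}$ (using $f\geq L\geq L^{\star}$). Lower-bounding the left-hand side by $(t+1)\min_{k}\mathbb{E}[\|\nabla f(W_{k})\|^{2}]$ and dividing by $\eta(t+1)$ yields
\begin{equation}\notag
\min_{k=0,\ldots,t}\mathbb{E}[\|\nabla f(W_{k})\|^{2}]\leq\frac{f(W_{0})-L^{\star}}{\eta(t+1)}+\frac{(\mathcal{L}+\lambda)\eta}{2}\bigl(\sigma^{2}+(G+\sup(\lambda\|W\|))^{2}\bigr).
\end{equation}

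Finally I would substitute the step-size prescription $\eta\leq C/\sqrt{t+1}$. At the boundary $\eta=C/\sqrt{t+1}$, the first summand equals $(L(W_{0})+\tfrac{\lambda}{2}\|W_{0}\|^{2}-L^{\star})/(C\sqrt{t+1})=C_{1}/\sqrt{t+1}$, and the second summand equals $C(\mathcal{L}+\lambda)(\sigma^{2}+(G+\sup(\lambda\|W\|))^{2})/(2\sqrt{t+1})\leq C_{2}/\sqrt{t+1}$; a short monotonicity check in $\eta$ shows the smaller step-sizes allowed by the inequality do not spoil either estimate (the first bound is decreasing in $\eta$ but still $\leq C_{1}/\sqrt{t+1}$ in the regime of interest once one verifies $\eta(t+1)\ge C\sqrt{t+1}$ for $\eta$ at the boundary; for smaller $\eta$ one reruns the derivation with the same $C$). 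The hard part is the book-keeping in this last step, together with the second-moment split that isolates the deterministic piece $(G+\sup(\lambda\|W\|))^{2}$ from the stochastic variance $\sigma^{2}$; once those are laid out correctly the telescoping argument is routine.
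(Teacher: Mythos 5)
Your proposal is correct and follows essentially the same route as the paper: both reduce the problem to standard nonconvex SGD analysis of the regularized objective $f(W)=L(W)+\tfrac{\lambda}{2}\|W\|^{2}$ by transferring the smoothness ($\mathcal{L}+\lambda$), lower bound ($f^{\star}\geq L^{\star}$), unbiasedness, variance, and gradient-norm ($G+\sup(\lambda\|W\|)$) conditions from $L$ to $f$. The only difference is that the paper invokes the resulting SGD convergence bound as a black box (a specialization of a cited theorem), whereas you re-derive it via the descent lemma and telescoping; your version even yields a marginally tighter constant (a factor $\tfrac{1}{2}$ on the $C_2$ term), and the boundary-versus-strict-inequality issue for $\eta$ that you flag is present in the paper's cited lemma as well.
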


\begin{theorem}

    Let function $L: \mathbb{R} \to \mathbb{R}$ be a $\mathcal{L}$-smooth function and $h$ be a convex function with $h'(\operatorname{ReLU}(\overline{z}))=0$.
    Assume $h$ is differentiable in $(-\infty,0)\cup(0,\infty)$. Let $\epsilon \in \mathbb{R}$ be a small value. $\mathbb{E}[ \nabla L(\bfit{W}, X) - \nabla L(\bfit{W})  ] = 0$, $\mathbb{E}[\| \nabla L(\bfit{W}, X) - \nabla L(\bfit{W}) \|^{2} ] \leq \sigma^{2}$, $\| \nabla l(\bfit{W}) \| \leq G$ for any $\bfit{W}$, and $\eta \leq \frac{C}{\sqrt{t+1}} $.
    If the model is trained $t'+1$ epochs after it has reached the stationary point $\bfit{W}^{*}$ where $L(\bfit{W}^{*}, X)=L^{\star}$, the increased loss $\Delta L=h(\operatorname{ReLU}(\overline{z}+\epsilon)) - h(\operatorname{ReLU}(\overline{z}))$ caused by error $\epsilon$ is bounded by:
        \begin{equation}
        ||\epsilon||^2 \cdot\mathbb{E}[|| \mathop{\operatorname{inf}}_{z>\overline{z}}\phi_{\overline{z}}(z)||^2] \leq \mathbb{E}[||\Delta h||^2]
        \end{equation}
        where 
        \begin{equation}
            \operatorname{sup} \mathbb{E}[||\operatorname{inf}_{z>\overline{z}}\phi_{\overline{z}}(z)||^2] \leq ||h'_{+,t}(\overline{z})||^2 + \frac{t'(C_1+C_2)}{d\sqrt{t+t'+1}}
        \end{equation}
        and:
        \begin{align}
            C_1 &=\frac{L(\bfit{W}_{0}) - L^{\star} }{C} \\
            C_2 &= C (\mathcal{L} + \lambda) ((G+  \sup(\lambda\|W\|) )^{2} + \sigma^{2}),
        \end{align}
\end{theorem}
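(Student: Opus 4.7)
The plan is to split the claim into two assertions: the lower bound $\|\epsilon\|^{2}\,\mathbb{E}[\|\inf_{z>\overline{z}}\phi_{\overline{z}}(z)\|^{2}] \leq \mathbb{E}[\|\Delta h\|^{2}]$, which follows almost immediately from Lemma~\ref{deltah}, and the supremum bound on $\mathbb{E}[\|\inf_{z>\overline{z}}\phi_{\overline{z}}(z)\|^{2}]$, which requires invoking the weight-decay gradient-norm bound restated as Theorem~\ref{eq:wd_incnorm}.

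For the lower bound, Lemma~\ref{deltah} gives $\epsilon\cdot\inf_{z>\overline{z}}\phi_{\overline{z}}(z) \leq \Delta h(\operatorname{ReLU}(\overline{z}))$. By Lemma~\ref{lemma:lr_deriv}, $\inf_{z>\overline{z}}\phi_{\overline{z}}(z) = h'_{+}(\overline{z})$, which is nonnegative at a stationary point of a convex $h$, so the right-hand side is nonnegative and the inequality survives squaring up to a short case analysis on the sign of $\epsilon$. Taking expectations on both sides then yields the claimed inequality directly.

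For the upper bound, I would identify $\inf_{z>\overline{z}}\phi_{\overline{z}}(z)$ with the right derivative $h'_{+}(\overline{z})$ via Lemma~\ref{lemma:lr_deriv} and write a telescoping decomposition $h'_{+,t+t'}(\overline{z}) = h'_{+,t}(\overline{z}) + \sum_{k=0}^{t'-1}\bigl(h'_{+,t+k+1}(\overline{z}) - h'_{+,t+k}(\overline{z})\bigr)$. Each increment is controlled by the SGD update $\bfit{W}_{k+1}-\bfit{W}_{k}=-\eta\nabla L(\bfit{W}_{k})$ combined with the $\mathcal{L}$-smoothness of $L$, so the squared increment is proportional to the squared full-gradient norm at step $k$. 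Because a scalar right-derivative $h'_{+}(\overline{z})$ corresponds, via the chain rule, to one of the $d$ coordinates of $\nabla L(\bfit{W})$, a uniform-coordinate averaging gives $\mathbb{E}[\|h'_{+}(\overline{z})\|^{2}] \lesssim \mathbb{E}[\|\nabla L(\bfit{W})\|^{2}]/d$, which is exactly where the $1/d$ factor in the stated bound originates. Applying Theorem~\ref{eq:wd_incnorm} at epoch $t+t'$ bounds $\min_{k}\mathbb{E}[\|\nabla L(\bfit{W}_{k})\|^{2}] \leq (C_{1}+C_{2})/\sqrt{t+t'+1}$; accumulating the $t'$ per-step increments then produces the additive $t'(C_{1}+C_{2})/(d\sqrt{t+t'+1})$ term on top of the epoch-$t$ value $\|h'_{+,t}(\overline{z})\|^{2}$.

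The hard part will be making the reduction from a scalar, per-neuron right derivative $h'_{+}(\overline{z})$ to the full-gradient norm $\|\nabla L(\bfit{W})\|^{2}$ fully rigorous, because extracting precisely the claimed $1/d$ factor depends on a uniform-coordinate averaging assumption that is natural but not automatic. The non-differentiability of $h$ at $0$ inherited from $\operatorname{ReLU}$ further forces every chain-rule and telescoping step to be carried out in terms of one-sided derivatives and subgradients, extending the machinery already established in Lemma~\ref{lemma:lr_deriv} and Lemma~\ref{deltah}.
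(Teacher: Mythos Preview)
Your proposal is correct and follows essentially the same approach as the paper: invoke Lemma~\ref{deltah} for the lower bound, decompose the right derivative at epoch $t+t'$ into its value at epoch $t$ plus accumulated per-step contributions scaled by $1/d$, and apply Theorem~\ref{eq:wd_incnorm} to bound the accumulated gradient norms by $(t'+1)(C_{1}+C_{2})/\sqrt{t+t'+1}$. Your telescoping argument and the coordinate-averaging justification for the $1/d$ factor make explicit what the paper records only as ``it's easy to get,'' and your caveat about the rigor of that reduction is well-placed, since the paper's own proof simply asserts the identity $\sup\mathbb{E}[\|\inf_{z>\overline{z}}\phi_{\overline{z}}(z)\|^{2}] = \|h'_{+,t}(\overline{z})\|^{2} + \tfrac{1}{d}\mathbb{E}[\sum_{i}\|\nabla f\|^{2}]$ without further argument.
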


\begin{proof}
    From lemma \ref{deltah}, we have:
    \begin{equation}
                \epsilon\mathop{\operatorname{inf}}_{z>\overline{z}}\phi_{\overline{z}}(z) \leq \Delta h(\operatorname{ReLU}(\overline{z}))
    \end{equation}
    It's easy to get 
    \begin{equation}
        \operatorname{sup}\mathbb{E}[||\mathop{\operatorname{inf}}_{z>\overline{z}}\phi_{\overline{z}}(z)||^2] = ||h'_{+,t}(\overline{z})||^2 + \frac{1}{d}\mathbb{E}[\sum_{i=t}^{t+t'}||\nabla f (\operatorname{ReLU}(z))||^2]
    \end{equation}
    From theorem \ref{eq:wd_incnorm} we have:
    \begin{equation}
        \sup \mathbb{E}[\sum\limits_{i=t}^{t+t'+1}||\nabla f (\operatorname{ReLU}(z))||^2]=  \mathop{\operatorname{\min}}_{k=0,\ldots, t+t'+1} (t'+1)\mathbb{E}[\| \nabla f(W_{k})x\|^{2}] \leq \frac{(t'+1) C_{0}}{\sqrt{t+t'+1}}
    \end{equation}
    such that we have:
    \begin{equation}
        \operatorname{sup} \mathbb{E}[||\mathop{\operatorname{inf}}_{z>\overline{z}}\phi_{\overline{z}}(z)||^2] = ||h'_{+,t}(\overline{z})||^2 + \frac{1}{d}\sup \mathbb{E}[\sum\limits_{i=t}^{t+t'+1}||\nabla f (\operatorname{ReLU}(z))||^2] \leq ||f'_{+,t}(\overline{z})||^2 + \frac{(t'+1) C_{0}}{d\sqrt{t+t'+1}}
    \end{equation}
            
    The proof is complete.
\end{proof}

\section{Proof of theorem \ref{eq:wd_incnorm}} \label{app:proof_fromotherpaper}

Here we borrow the proof from~\cite{xie2024overlooked} for reader's reference:

\begin{lemma}[Convergence of SGD, a specialized case of Theorem 1 in \cite{yan2018unified} with $\beta =s =0$]
 \label{pr:sgdconverge}
Assume that $L(\theta)$ is an $\mathcal{L}$-smooth function\footnote{It means that $\|\nabla L(\theta_{a} ) - \nabla L(\theta_{b} ) \| \leq \mathcal{L} \| \theta_{a} - \theta_{b}\|$ holds for any $\theta_{a}$ and $\theta_{b}$.}, $L$ is lower bounded as $L(\theta) \geq L^{\star} $, $\mathbb{E}[ \nabla L(\theta, X) - \nabla L(\theta)  ] = 0$, $\mathbb{E}[\| \nabla L(\theta, X) - \nabla L(\theta) \|^{2} ] \leq \delta^{2} $, $\| \nabla L(\theta) \| \leq G$ for any $\theta$. Let SGD optimize $L$ for $t+1$ iterations. If $\eta \leq \frac{C}{\sqrt{t+1}} $, we have
\begin{align}
 & \min_{k=0,\ldots, t} \mathbb{E}[\| \nabla L(\theta_{k})\|^{2} ]  \leq \frac{C_{0}}{\sqrt{t+1}},
\end{align}
where $C_{0} = \left[\frac{L(\theta_{0}) - L^{\star} }{C} + C \mathcal{L} (G^{2} + \sigma^{2})\right]$.
\end{lemma}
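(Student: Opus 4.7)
The plan is the classical textbook SGD convergence argument: combine $\mathcal{L}$-smoothness to get per-step descent in expectation, telescope over the $t+1$ iterations to cancel the loss values, and finish with the elementary $\min \leq \text{avg}$ inequality.

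First I would apply $\mathcal{L}$-smoothness of $L$ to obtain the quadratic upper bound
\begin{equation}
L(\theta_{k+1}) \leq L(\theta_k) + \langle \nabla L(\theta_k),\,\theta_{k+1}-\theta_k\rangle + \tfrac{\mathcal{L}}{2}\|\theta_{k+1}-\theta_k\|^2,
\end{equation}
substitute the SGD update $\theta_{k+1}=\theta_k-\eta\,\nabla L(\theta_k,X_k)$, take expectation conditional on $\theta_k$, and use unbiasedness $\mathbb{E}[\nabla L(\theta_k,X_k)\mid\theta_k]=\nabla L(\theta_k)$ to reach
\begin{equation}
\mathbb{E}[L(\theta_{k+1})\mid\theta_k] \leq L(\theta_k) - \eta\|\nabla L(\theta_k)\|^2 + \tfrac{\mathcal{L}\eta^{2}}{2}\,\mathbb{E}\bigl[\|\nabla L(\theta_k,X_k)\|^2 \,\big|\, \theta_k\bigr].
\end{equation}
The bounded-variance and bounded-gradient hypotheses then give $\mathbb{E}\|\nabla L(\theta_k,X_k)\|^2 \leq \|\nabla L(\theta_k)\|^2 + \sigma^2 \leq G^2+\sigma^2$, so the stochastic second moment can be replaced by the deterministic constant $G^2+\sigma^2$.

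Taking total expectation, rearranging to isolate $\eta\,\mathbb{E}\|\nabla L(\theta_k)\|^2$, and summing over $k=0,\ldots,t$ makes the left-hand side telescope in the loss values down to at most $L(\theta_0)-L^\star$, yielding
\begin{equation}
\eta \sum_{k=0}^{t}\mathbb{E}\|\nabla L(\theta_k)\|^2 \;\leq\; L(\theta_0)-L^\star \;+\; (t+1)\,\tfrac{\mathcal{L}\eta^{2}}{2}\,(G^2+\sigma^2).
\end{equation}
Dividing by $(t+1)\eta$ and using $\min_k\mathbb{E}\|\nabla L(\theta_k)\|^2 \leq \tfrac{1}{t+1}\sum_{k}\mathbb{E}\|\nabla L(\theta_k)\|^2$ gives
\begin{equation}
\min_{k=0,\ldots,t}\mathbb{E}\|\nabla L(\theta_k)\|^2 \;\leq\; \frac{L(\theta_0)-L^\star}{(t+1)\eta} \;+\; \frac{\mathcal{L}\eta}{2}(G^2+\sigma^2).
\end{equation}
Finally, plugging $\eta = C/\sqrt{t+1}$, which is the largest value allowed by the hypothesis $\eta\leq C/\sqrt{t+1}$, collapses both terms to the common $1/\sqrt{t+1}$ scale and reproduces the stated constant $C_{0}=(L(\theta_0)-L^\star)/C + C\mathcal{L}(G^2+\sigma^2)$ (absorbing the $1/2$ into $C_0$, which matches the form stated in the lemma up to a harmless factor-of-two in the constant).

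The only real subtlety is the one-sided nature of the step-size condition: the first term above is increasing in $1/\eta$ while the second is increasing in $\eta$, so the bound is not monotone in $\eta$ and one cannot simply plug in the worst-case $\eta$. The natural resolution, implicit in the statement, is to evaluate the bound at the saturating choice $\eta=C/\sqrt{t+1}$, which is admissible and makes the two terms balance to $O(1/\sqrt{t+1})$. Beyond this, the argument is mechanical and requires only the four hypotheses (smoothness, lower-boundedness, unbiasedness, and the two moment bounds); I do not foresee any additional technical obstacle.
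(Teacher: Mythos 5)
Your proof is correct, but it is worth noting that the paper does not actually prove this lemma at all: it is imported verbatim as a specialized case of Theorem~1 of the cited work (Yan et al.), and the appendix proof that follows it is the proof of the weight-decay variant, obtained by verifying that $f(\theta)=L(\theta)+\tfrac{\lambda}{2}\|\theta\|^2$ inherits the lemma's hypotheses with constants $\mathcal{L}+\lambda$, $G+\lambda\|\theta\|_{\max}$, etc. Your first-principles argument --- descent lemma from $\mathcal{L}$-smoothness, conditional expectation with unbiasedness, the second-moment bound $\mathbb{E}\|\nabla L(\theta_k,X_k)\|^2\leq\|\nabla L(\theta_k)\|^2+\sigma^2\leq G^2+\sigma^2$, telescoping, and $\min\leq\mathrm{avg}$ --- is the standard route and yields an even slightly sharper constant (the $\tfrac{1}{2}$ on the $C\mathcal{L}(G^2+\sigma^2)$ term), so it certainly implies the stated bound. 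What your approach buys is self-containedness and transparency about where each hypothesis enters; what the paper's approach buys is brevity, since the real content of its appendix is the reduction of the regularized objective to this black box. Your closing caveat is also well taken and in fact slightly understated: as literally written (``if $\eta\leq C/\sqrt{t+1}$''), the bound cannot hold uniformly over all admissible $\eta$ (take $\eta\to 0$ and the first term $\tfrac{L(\theta_0)-L^\star}{(t+1)\eta}$ diverges), so the statement must be read as fixing $\eta=\Theta(C/\sqrt{t+1})$; this looseness is inherited from the cited source and is not a defect of your derivation.
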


\begin{proof}
Given the conditions of $L(\theta)$ in Lemma \ref{pr:sgdconverge}, we may obtain the resulted conditions of $f(\theta) = L(\theta) + \frac{\lambda}{2}\|\theta\|^{2}$.

As $L(\theta)$ is an $\mathcal{L}$-smooth function, we have
\begin{align}
\|\nabla f(\theta_{a} ) - \nabla f(\theta_{b} ) \| = \|\nabla L(\theta_{a} ) - \nabla L(\theta_{b} ) + \lambda (\theta_{a} - \theta_{b}) \| \leq (\mathcal{L} + \lambda) \| \theta_{a} - \theta_{b}\|
\end{align}
holds for any $\theta_{a}$ and $\theta_{b}$. It shows that $f(\theta)$ is an $(\mathcal{L}+\lambda)$-smooth function.

As $L$ is lower bounded as $L(\theta) \geq L^{\star}$, we have 
\begin{align}
f^{\star} \geq L^{\star}.
\end{align}

As $\mathbb{E}[ \nabla L(\theta, X) - \nabla L(\theta)  ] = 0$, we have
\begin{align}
\mathbb{E}[ \nabla f(\theta, X) - \nabla f(\theta)  ] = \mathbb{E}[ \nabla L(\theta, X) - \nabla L(\theta)  ] = 0.
\end{align}

As $\mathbb{E}[\| \nabla L(\theta, X) - \nabla L(\theta) \|^{2} ] \leq \delta^{2} $, we have
\begin{align}
\mathbb{E}[\| \nabla f(\theta, X) - \nabla f(\theta) \|^{2} ] = \mathbb{E}[\| \nabla L(\theta, X) - \nabla L(\theta) \|^{2} ] \leq \delta^{2}.
\end{align}

As $\| \nabla L(\theta) \| \leq G$, we have
\begin{align}
\| \nabla f(\theta) \| = \| \nabla L(\theta) + \lambda \theta \| \leq G + \lambda \|\theta\|_{\max},
\end{align}
where $\|\theta\|_{\max}$ is the maximum $L_{2}$ norm of any $\theta$.

Introducing the derived conditions Eq. (12) - (16) for $f$ into Lemma \ref{pr:sgdconverge}, we may treat $f$ as the objective optimized by SGD. Then we have
\begin{align}
 \min_{k=0,\ldots, t} \mathbb{E}[\| \nabla f(\theta_{k})\|^{2} ]  &\leq \frac{1}{\sqrt{t+1}}  \left[\frac{f(\theta_{0})  - f^{\star} }{C} + C (\mathcal{L} + \lambda) ((G+ \lambda\|\theta\|_{\max})^{2} + \sigma^{2})\right] \\
 &\leq \frac{1}{\sqrt{t+1}}  \left[\frac{L(\theta_{0}) + \frac{\lambda}{2} \|\theta_{0}\|^{2}  - L^{\star} }{C} + C (\mathcal{L} + \lambda) ((G+ \lambda\|\theta\|_{\max})^{2} + \sigma^{2})\right]
\end{align}

Obviously, the gradient norm upper bound in convergence analysis monotonically increases as the weight decay strength $\lambda$.

The proof is complete.

\end{proof}

\section{Adversarial Samples against PANN only}
We present a method to generate adversarial samples against PANN only to help readers better observe their differences. An intuitive way to find adversarial samples only for PANN is to look for pixels where gradients differ most. However, information contributing to the outputs also has gradient differences. As discussed in Section 3, perturbations on this information can affect both models. So, we set a small value $\epsilon_{lim}$ to limit the gradient changes on the backbone model. Besides, to make the phenomenon more apparent, we set $\epsilon_{atk}$ to preserve only large gradient differences. 
\begin{figure}[h]
  \centering
  \includegraphics[width=\columnwidth]{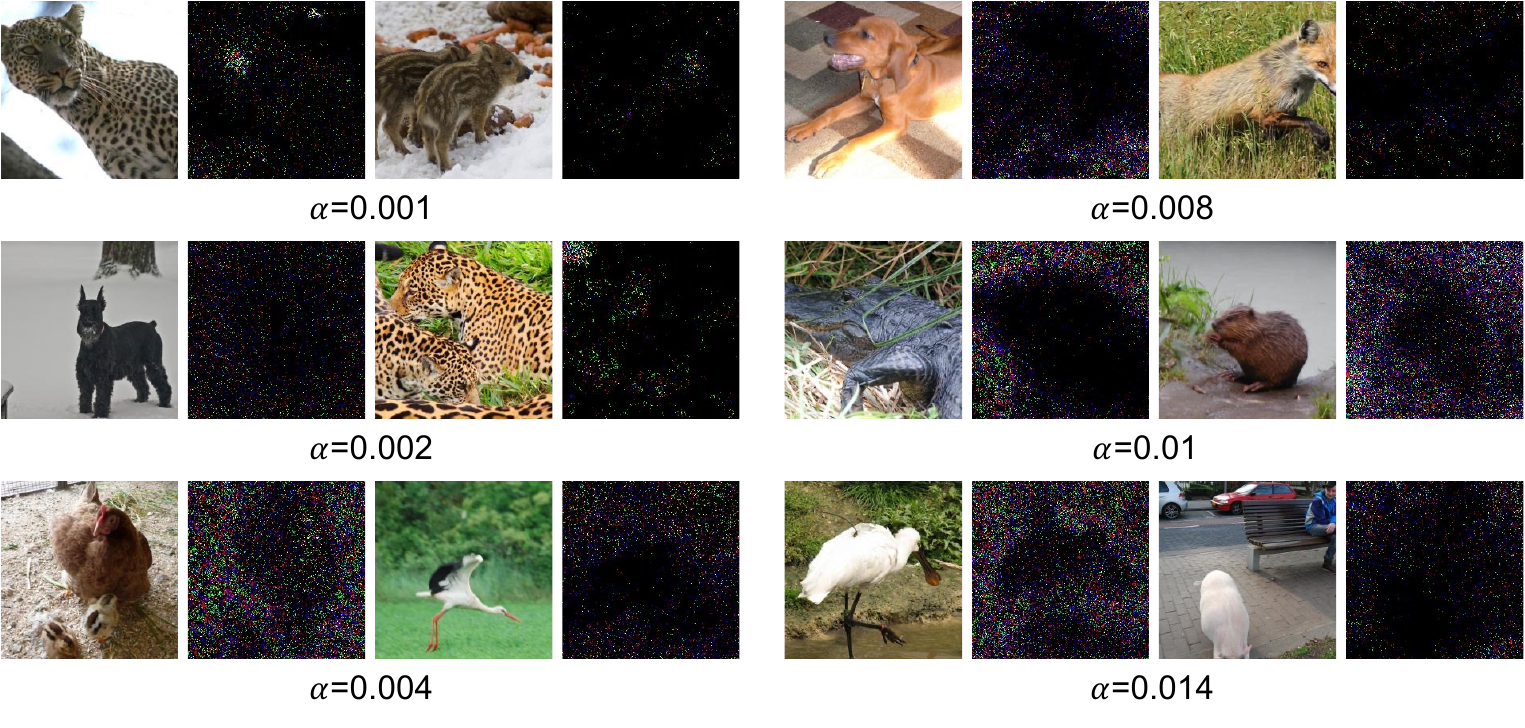}
  \caption{The benign samples and perturbations $\delta$ only against PANN.  $\delta$ can concentrate on image backgrounds (ResNet-18, Imagenet, Precision $2^{-14}$)}
  \label{fig:algorithim1}
\end{figure}
\begin{algorithm}
\caption{Evasion Attacks against PANN Only}
\label{alg:evasion_againstPANN}
\textbf{Input}: Clean Input $x$; \\
\textbf{Parameter}: Backbone model $\mathbb{F}$; PANN $\widetilde{\mathbb{F}}$; Attack Step Length $\alpha$; Limitation for Perturbations $\epsilon$ \\
\textbf{Output}: Perturbation $\delta$
\begin{algorithmic}
\STATE Initialize $\delta_0$ as all zero;
\WHILE{$\mathbb{F}(x+\delta) \neq y$ or $\widetilde{\mathbb{F}}(x+\delta) = y$}
    \IF{$\widetilde{\mathbb{F}}(x+\delta) = y$}
        \STATE $\delta = \delta + \alpha \nabla_{\delta} \mathcal{L}_{\widetilde{\mathbb{F}}}(x+\delta)$
        \STATE $\delta = Clip_{[-\epsilon,\epsilon]}(Random \ Search(x+\delta))$
        \STATE $\delta = \delta \odot Mask(
        \nabla_{\delta} (\mathcal{L}_{\widetilde{\mathbb{F}}}(x+\delta)
        -\nabla_{\delta} \mathcal{L}_{\mathbb{F}}(x+\delta)) \geq \epsilon_{atk}) $
        \STATE $\delta = \delta \odot Mask(|\nabla_{\delta} \mathcal{L}_{\mathbb{F}}(x+\delta) -
        \nabla_{\delta} \mathcal{L}_{\mathbb{F}}(x)| \leq \epsilon_{lim})$
    \ENDIF
    \IF{$\mathbb{F}(x+\delta) \neq y$}
        \STATE $Backtrack(\delta)$   
    \ENDIF
\ENDWHILE
\STATE \textbf{Return} $\delta$
\end{algorithmic}
\end{algorithm}
\par Additionally, two problems were faced when using PGD-based attacks: (1)Firm-step search can cross the adversarial samples. As shown in Figure \ref{fig:approxsgn}, approximation errors fluctuate when the input changes, so valid perturbations are in discrete intervals. It's difficult to estimate where these intervals are. Using particularly smaller steps can avoid this, but is inefficient. Therefore, we apply random searches after each fixed step. (2) Even if limits are set, the perturbations may still affect the backbone model. Once the backbone model gives a wrong output, the PGD interaction will fall into this area and can hardly escape. To avoid this, we apply backtracking once the backbone model gives the wrong results. 
The whole process is represented in Algorithm \ref{alg:evasion_againstPANN}.
\par We attack PANN with precision $2^{-14}$ and approximation interval $[-100,100]$. Some results on the Pytorch Resnet-18 pre-trained model are shown in Figure \ref{fig:algorithim1}. It can be seen that $\delta$ can concentrate on both backgrounds and objects. Especially when $\alpha$ is not too small, $\delta$ can locate mainly on backgrounds for many samples, which provides evidence for the differences in irrelevant information in the input background between PANN and backbone models.

\end{appendices}
\end{document}